\newcommand{\State}{\STATE}
\newcommand{\For}{\FOR}
\newcommand{\EndFor}{\ENDFOR}
\newcommand{\Return}{\textbf{Return}}
\newtheorem{theorem}{Theorem}
\newtheorem{lemma}[theorem]{Lemma}
\newtheorem{proposition}{Proposition}
\newtheorem{remark}{Remark}
\newtheorem{definition}{Definition}
\newtheorem{fact}{Fact}
\newtheorem{assump}{Assumption}
\newcommand{\RR}{\mathbb{R}}
\newcommand{\cS}{\mathcal{S}}
\newcommand{\cF}{\mathcal{F}}
\newcommand{\cT}{\mathcal{T}}
\newcommand{\PP}{\Pr}
\newcommand{\EE}{\mathbb{E}}
\newcommand{\cE}{\mathcal{E}}
\DeclareMathOperator{\poly}{poly}
\newcommand{\cA}{\mathcal{A}}
\newcommand{\cG}{\mathcal{G}}
\newcommand{\cK}{\mathcal{K}}
\newcommand{\wt}[1]{\widetilde{#1}}
\newcommand{\wh}[1]{\widehat{#1}}
\newcommand{\wb}[1]{\overline{#1}}
\newcommand{\codecmt}[1]{\textcolor{blue!10!black!80!green!70!}{$\triangleright$~#1}}
\newcommand{\cM}{\mathcal{M}}
\DeclareMathOperator{\var}{Var}
\newcommand{\colnote}[3]{}
\title{Sample-Optimal Parametric Q-Learning Using Linearly Additive Features}
\author{
	Lin F. Yang%
	\thanks{Email: \texttt{lin.yang@princeton.edu }
	}
	\\
	Princeton University
	\and
	Mengdi Wang%
	\thanks{Email: \texttt{mengdiw@princeton.edu}
}
\\
Princeton University	
}
\begin{document}

\maketitle

\begin{abstract}
Consider a Markov decision process (MDP) that admits a set of state-action features, which can linearly express the process's probabilistic transition model. We propose a parametric Q-learning algorithm that finds an approximate-optimal policy using a sample size proportional to the feature dimension $K$ and invariant with respect to the size of the state space. To further improve its sample efficiency, we exploit the monotonicity property and intrinsic noise structure of the Bellman operator, provided the existence of anchor state-actions that imply implicit non-negativity in the feature space. We augment the algorithm using techniques of variance reduction, monotonicity preservation, and confidence bounds. It is proved to find a policy which is $\epsilon$-optimal from any initial state with high probability using $\wt{O}(K/\epsilon^2(1-\gamma)^3)$ sample transitions for arbitrarily large-scale MDP with a discount factor $\gamma\in(0,1)$. A matching information-theoretical lower bound is proved, confirming the sample optimality of the proposed method with respect to all parameters (up to polylog factors).
\end{abstract}

\section{Introduction}

Markov decision problems (MDP) are known to suffer from the curse of dimensionality. A basic theoretical question is: Suppose that one can query sample transitions from any state of the system using any action, how many samples are needed for learning a good policy? In the tabular setting where the MDP has $S$ states and $A$ actions, the necessary and sufficient sample size for finding an approximate-optimal policy is $ \wt{\Theta}(\frac{SA}{(1-\gamma)^3})$
\footnote{$\wt{f(\cdot)}$ ignores $\poly\log f(\cdot)$ factors.}
 where $\gamma\in(0,1)$ is a discount factor \cite{azar2013minimax, sidford2018near}. However, this theoretical-sharp result does not generalize to practical problems where $S,A$ can be arbitrarily large or infinite.

Let us consider MDP with structural knowledges. Suppose that each state-action pair $(s,a)$ admits a feature vector $\phi(s,a)\in\RR^K$ that can express the transition dynamics conditioning on $(s,a)$. 
In practice, the abstract state variable $s$ can be a sequence of historical records or a raw-pixel image, containing much information that is not related to the decision process. 
More general settings of MDP with structural knowledges have been considered in \citet{azizzadenesheli2016reinforcement,jiang2017contextual} and references therein.

In this paper, we focus on an important and very basic class of structured MDP, where the features can represent transition distributions $P(\cdot\mid \cdot)$ through an unknown {\it linear additive model}. The feature-based linear transition model is related to the commonly used linear Q-function model. We show that they are essentially equivalent when there is zero Bellman error (a notion introduced in \citet{ms08}). A similar argument has been made in \citet{parr2008analysis}. It also contains as a special case the soft state aggregation model \cite{singh1995reinforcement, duan2018state}.
In this setting, we will study the theoretic sample complexity for learning a good policy by querying state-transition samples. We also aim to develop efficient policy learning algorithms with provable sample efficiency. We study the following two questions: 


\vspace{0.5pt}
{\bf Q1:} How many observations of state-action-state transitions are {\it necessary} for finding an $\epsilon$-optimal policy? 

\vspace{0.5pt}
{\bf Q2:} How many samples are {\it sufficient} for finding an $\epsilon$-optimal policy with high probability  and {\it how} to find it? 

\vspace{0.5pt}
To answer {\bf Q1}, an information-theoretic lower bound is provided (Theorem~\ref{theorem-lower}), suggesting that, regardless of the learning algorithm, the necessary sample size for finding a good policy with high probability is $
	\wt{\Omega}\big(\frac{K}{(1-\gamma)^3\cdot\epsilon^2}\big)
	$ where $K$ is the dimension of feature space.
	
To answer {\bf Q2}, we develop Q-learning-like algorithms that take as input state-transition samples and output a parameterized policy. A basic parametric Q-learning algorithm performs approximate value-iteration estimates on a few points of the Q function, so that actual updates happen on the parameters. This idea originates from the phased Q-learning \cite{kearns1999finite} and the fitted value iteration \cite{ms08, asm08,asm08a}. Our algorithm is simpler and does not require function fitting. Convergence and approximation error analysis is provided even when the MDP cannot be fully expressed using the features. Despite its simplicity, the basic algorithm has complexity $\wt{O}\big(\frac{K}{(1-\gamma)^7 \cdot\epsilon^2}\big)$, which is not sample-optimal. 

Furthermore, we develop an accelerated version of parametric Q-learning that involves taking mini-batches, computing confidence bounds, and using monotonicity-preserving and variance reduction techniques. It uses some ideas from fast solvers of tabular MDP \cite{sidford2018variance, sidford2018near}. 
To fully exploit the monotonicity property of the Bellman operator in the algorithm, we need an additional ``anchor" assumption, i.e., there exists a (small) set of state-actions that can represent the remaining ones using convex combinations. The ``anchors" can be viewed as vertices of the state-action space, and implies an intrinsic nonnegativity in the feature space which is needed for monotonic policy improvement. We show that the algorithm takes just enough  samples per update to keep the value/policy iterates within a sequence of narrow confidence regions that monotonically improve to the near-optimal solutions.  It finds an $\epsilon$-optimal policy (regardless of the initial state) with probability at least $1-\delta$ using  
	$$
	\wt{\Theta}\bigg(\frac{K}{(1-\gamma)^3 \cdot\epsilon^2}\cdot\log\frac1\delta\bigg)
	$$
	samples. It matches the information-theoretic lower bound up to $\log(\cdot)$ factors, thus the algorithm is nearly sample-optimal. If $\gamma=0.99$, this algorithm is $(1-\gamma)^{-4} = 10^8$ times faster than the basic algorithm.


Our model, algorithms and analyses relate to previous literatures on the sample complexity of tabular MDP, reinforcement learning with function approximation, linear models and etc. A detailed account for the related literatures is given in Section \ref{related-work}. All technical proofs are given in the appendix. To our best knowledge, this work provides the first sample-optimal algorithm and sharp complexity analysis (up to polylog factors) for MDP with linear models.

%
%

\section{Markov Decision Process, Features, Linear Models}
\label{sec:probdef}

In this section we introduce the basics of Markov decision process and the feature-based linear transition model. 

\subsection{Preliminaries}

In a \emph{discounted Markov decision process} (DMDP or MDP for short),
there is a finite set of \emph{states} $\cS$, a finite set of \emph{actions} $\cA$. Let $S=|\cS|$ and $A=|\cA|$.
At any state $s\in \cS$, an agent is allowed to play an action $a\in \cA$.
She receives an immediate reward $r(s, a)\in [0, 1]$ after playing $a$ at $s$,
and then the process will transition to the next state $s'\in \cS$ with probability $P(s'|s,a)$, where $P$ is the collection of \emph{transition distributions}.
The full instance of MDP can be described by the tuple $M=(\cS,\cA,P,r,\gamma).$
The agent would like to find a \emph{policy} $\pi:\cS\rightarrow \cA$ that maximizes the long-term expected reward starting from every state $s$, i.e.,
\[
v^{\pi}(s) := \EE\bigg[\sum_{t=0}^\infty \gamma^tr(s^t, \pi(s^t))|s^0 = s\bigg]
\]
where $\gamma \in(0,1)$ is a discount factor.
We call $v^{\pi}\in \RR^{\cS}$ the \emph{value function} of policy $\pi$.
A policy $\pi^*$ is said to be \emph{optimal} if it attains the maximal possible value at every state.
In fact, it is known (see e.g. \citet{puterman2014markov}) that there is a \emph{unique} optimal value function  $v^*$ such that
\[
\forall s\in \cS: \quad v^*(s) = \max_{\pi}  v^{\pi}(s) = v^{\pi^*}(s).
\]
A policy $\pi$ is said to be \emph{$\epsilon$-optimal} if it achieves near-optimal cumulative reward from {\it any} initial state, i.e.,
$$v^{\pi}(s) \geq v^*(s) -\epsilon,\qquad\forall\ s\in\cS, $$
or equivalently $\|v^{\pi}-v^*\|_{\infty}\le \epsilon$ for short.
We denote the Bellman operator $\cT:\RR^{\cS}\to \RR^{\cS}$ as
\[
\forall s\in \cS: \quad [\cT v](s) = \max_{a\in \cA}[r(s,a) + \gamma P(\cdot|s,a)^\top v].
\]
A vector $v^*$ is the optimal value of the DMDP if and only if it satisfies the {\it Bellman equation} $v= \cT v$.

The Q-function of a policy $\pi$ is defined as $Q^{\pi} (s,a) = r(s,a) + \gamma \sum_{s'} P(s'|s,a) v^{\pi}(s')$, and the optimal Q-function is denoted by $Q^*=Q^{\pi^*}.$
We overload the notation $\cT$ to also denote the Bellman operator in the space of Q-functions, i.e., $\cT:\RR^{\cS\times \cA}\to \RR^{\cS\times \cA}$ such that
$$\mathcal{T}Q (s,a) = r(s,a) + \gamma P(\cdot|s,a)^\top\max_{a'}Q(\cdot,a').$$
A vector $Q^*\in\RR^{\cS\times \cA}$ is the optimal Q-function if and only if it satisfies the Bellman equation $Q=\cT Q.$

We use $O$,$\Omega$, $\Theta$ to denote leading orders, and we use $\tilde O$,$\tilde \Omega$, $\tilde\Theta$ to omit polylog factors. We use $\lesssim$ to denote ``approximately less than" by ignoring non-leading order terms, constant and polylog factors. 

\subsection{Feature-based Linear Transition Model}

We study Markov decision processes with structural knowledges. Suppose that the learning agent is given a set of $K$ feature functions $\phi_1, \phi_2, \ldots, \phi_K: \cS\times \cA\rightarrow \RR$.
The feature $\phi$ maps the raw state and action $(s,a)$ into the $K$-dimensional vector  
$$\phi(s,a) = [\phi_1(s,a), \phi_2(s,a), \ldots, \phi_K(s,a)]\in \RR^{K}.$$
Suppose the feature vector $\phi(s,a)$ is sufficient to express the future dynamics of the process conditioning on the current raw state and action. In particular, we focus on a basic linear model given below. 

\begin{definition}[Feature-based Linear Transition Model]
	\label{assum:context}
	Consider a DMDP instance $M=(\cS,\cA,P,r,\gamma)$ and a feature map $\phi: {\cS\times \cA}\to\RR^K.$
	We say that $M$ admits a {linear feature representation} $\phi$ if for every $s,a,s'$, 
	\[
	P(s' |s,a) = \sum_{k\in [K]} \phi_k(s,a) \psi_k(s') .
	\]
	for some functions $\psi_1,\ldots,\psi_{K}:\cS\to\RR.$ We denote the set of all such MDP instances as  $\mathcal{M}^{trans}(\cS,\cA,\gamma, \phi)$.
	We denote $\mathcal{M}^{trans}_K(\cS,\cA,\gamma)$ the set of all DMDP instances that admits a $K$-dimensional feature representation. 
\end{definition}

\begin{remark}[Independence of rewards] \rm
The feature representations $\phi(s,a)$ in Definition \ref{assum:context} capture the transition dynamics of the Markov process under different actions. It is a form of structural knowledge about the environment. It has nothing to do with the rewards $r(s,a)$.
\end{remark}

\begin{remark}[Combining state features and action features]\rm
In many settings one may be given a state-only feature map $\phi_1$ and an action-only feature map $\phi_2$. In this case, one can construct the joint state-action feature by $\phi(s,a) = \phi_1(s)\phi_2(a)$. As long as the MDP admits a linear transition model in both $\phi_1,\phi_2$, it also admits a linear representation in the product feature $\phi= \phi_1\times\phi_2$.
\end{remark}

\begin{remark}[Relation to soft state aggregation]\rm
The feature-based linear transition model (Definition \ref{assum:context}) contains a worth-noting special case. When each $\phi(s,a)\in\RR^K$ and $\psi_k\in\RR^{S}$ is a probability density function, the linear transition model reduces to a soft state aggregation model \cite{singh1995reinforcement, duan2018state}. In the soft state aggregation model, each state can be represented by a mixture of latent meta-states, through aggregation and disaggregation distributions. There would be $K$ meta-states, which can be viewed as the leading ``modes" of the process. 
In contrast, our feature-based transition model is much more general. Our feature map $\phi$ can be anything as long as it is representative of the transition distributions. It captures information about not only the states but also the actions. 
\end{remark}

\subsection{Relation to Linear Q-function Model}

Linear models are commonly used for approximating value functions or Q-functions using given features (sometimes referred to as basis functions). 
The proposed linear transition model is closely related to the linear Q-function model, where $Q^{\pi}$'s are assumed to admit a linear representation.

First it is easy to see that if the MDP admits a linear transition model using $\phi$, the Q-functions admit a linear model.
\begin{proposition}
\label{prop:relatelm}
Let $M \in \mathcal{M}^{trans}(\cS,\cA,\gamma, \phi)$. Then $Q^{\pi} \in \hbox{Span}(r,\phi)$ for all $\pi$.
\end{proposition}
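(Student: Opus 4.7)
The claim is essentially a direct substitution: once we unfold one step of the Bellman recursion for $Q^{\pi}$ and plug in the linear model for $P$, the feature vector $\phi(s,a)$ factors out and the dependence on $s'$ collapses into scalar coefficients. My plan is to make this explicit and identify the coefficient vector.

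\textbf{Step 1: One-step Bellman unfolding.} By definition of the Q-function of a policy $\pi$,
\[
Q^{\pi}(s,a) \;=\; r(s,a) + \gamma \sum_{s'\in\cS} P(s'\mid s,a)\, v^{\pi}(s').
\]
This does not yet use the feature structure; it holds for any DMDP.

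\textbf{Step 2: Substitute the linear transition model.} Since $M\in \mathcal{M}^{trans}(\cS,\cA,\gamma,\phi)$, Definition~\ref{assum:context} gives $P(s'\mid s,a) = \sum_{k\in[K]} \phi_k(s,a)\,\psi_k(s')$. Substituting and swapping the finite sums yields
\[
Q^{\pi}(s,a) \;=\; r(s,a) + \gamma \sum_{k\in[K]} \phi_k(s,a) \sum_{s'\in\cS} \psi_k(s')\, v^{\pi}(s').
\]
The inner sum depends only on $k$ (not on $s,a$), so define the scalar
\[
w^{\pi}_k \;:=\; \gamma \sum_{s'\in\cS} \psi_k(s')\, v^{\pi}(s'), \qquad k=1,\ldots,K,
\]
which is a fixed vector $w^{\pi}\in\RR^K$ determined by $\pi$, $\psi$, and $\gamma$ alone.

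\textbf{Step 3: Conclude membership in the span.} With this notation,
\[
Q^{\pi}(s,a) \;=\; r(s,a) + \sum_{k=1}^{K} w^{\pi}_k\,\phi_k(s,a) \;=\; 1\cdot r(s,a) + \langle w^{\pi},\phi(s,a)\rangle,
\]
which exhibits $Q^{\pi}$ as a linear combination of the reward function $r$ and the $K$ feature functions $\phi_1,\ldots,\phi_K$. Hence $Q^{\pi}\in \mathrm{Span}(r,\phi)$, completing the proof.

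\textbf{Main obstacle.} There is essentially no hard step here; the only subtlety worth flagging is that the coefficient $w^{\pi}$ is well-defined and finite, which follows because $\cS$ is finite (so the sum $\sum_{s'}\psi_k(s')v^{\pi}(s')$ converges trivially) and because $v^{\pi}$ is bounded by $1/(1-\gamma)$ under the standing assumption $r\in[0,1]$. No fixed-point argument or contraction is needed — the linearity of $P$ in $\phi$ immediately passes through the one-step Bellman expansion.
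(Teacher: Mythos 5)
Your proof is correct and follows essentially the same route as the paper's: unfold the one-step Bellman identity for $Q^{\pi}$, substitute the linear transition model, swap the finite sums, and read off the coefficient vector $w^{\pi}$ (the paper keeps the factor $\gamma$ outside $w^{\pi}$ rather than absorbing it, which is immaterial for membership in the span). No further comments needed.
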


Next we show that the two models are essentially ``equivalent" in terms of expressibility. 
A similar conclusion was made by \citet{parr2008analysis}, which showed that a particular solution obtained using the linear value model is equivalent to a solution obtained using a linear transition model. 
Recall a notion of Bellman error that was firstly introduced in \cite{ms08}. \begin{definition}[Bellman Error]
Let $\cF\subset \RR^{\cS\times\cA}$ be a class of Q functions. 
Given the Bellman operator $\cT$, the Bellman error of $\cF$ is 
$
d(\cT\cF, \cF) = \sup_{g\in \cF}\inf_{f\in \cF} \|f - \cT g\|.
$
\end{definition}

We show that the linear transition model is equivalent to the linear Q-function model with zero Bellman error.

\begin{proposition}[Equivalence to Zero Bellman Error]\label{prop-equiv}
Let $M=(\cS,\cA,P,r,\gamma)$ be an MDP instance with the Bellman operator $\cT$.
Let $\phi:\RR^{\cS\times \cA}$ be a feature map, and let $\cF =  \hbox{Span}(r, \phi)$. If $r\in \cF$, then 
$$d(\cT\cF, \cF) =0\ \ \  \hbox{if \& only if }\ \ 
M \in \mathcal{M}^{trans}(S,A,\gamma, \phi).$$
\end{proposition}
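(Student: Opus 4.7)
The forward direction ($\Leftarrow$) is a direct calculation. Assuming $P(s'|s,a)=\sum_k \phi_k(s,a)\psi_k(s')$, I substitute into the Bellman update and exchange the order of summation to pull $\phi_k(s,a)$ outside:
\[
[\cT g](s,a) \;=\; r(s,a) + \sum_k \phi_k(s,a)\,c_k(g),\qquad c_k(g)\eqdef \gamma\sum_{s'}\psi_k(s')\max_{a'}g(s',a').
\]
Since the $c_k(g)$ are scalars depending only on $g$, this shows $\cT g \in \mathrm{Span}(r,\phi)=\cF$ for every $g\in\cF$, so $d(\cT\cF,\cF)=0$.

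For the reverse direction ($\Rightarrow$), I first use that $\cF$ is a finite-dimensional (hence closed) subspace of $\RR^{\cS\times \cA}$, so $d(\cT\cF,\cF)=0$ forces $\cT g \in \cF$ for every $g \in \cF$. Because $r\in\cF$, subtracting gives $(s,a)\mapsto (Ph_g)(s,a)\in\cF$, where $h_g(s')\eqdef\max_{a'}g(s',a')$ and $(Ph)(s,a)\eqdef\sum_{s'}P(s'|s,a)h(s')$. Applying this to the rescaled function $\lambda g$ with $\lambda>0$ shows the same for arbitrary positive scale (dividing out $\gamma\lambda$), while $\lambda<0$ swaps $\max$ for $\min$ and yields $P\min_{a'}g(\cdot,a')\in\cF$ as well.

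The heart of the argument is passing from these ``test-function'' conclusions to the global bilinear decomposition $P(s'|s,a)=\sum_k \phi_k(s,a)\psi_k(s')$. My plan is: (i) select test functions $g\in\cF$---scalar multiples and combinations of $r$ and $\phi_1,\dots,\phi_K$---so that $\mathrm{Span}\{\max_{a'}g(\cdot,a'),\,\min_{a'}g(\cdot,a'):g\in\cF\}\subseteq\RR^{\cS}$ accesses enough directions; (ii) exploit the normalization $\sum_{s'}P(s'|s,a)=1$ to peel off the $r$-component inside $\cF=\mathrm{Span}(r,\phi)$, leaving $(s,a)\mapsto P(s'|s,a)\in\mathrm{Span}(\phi)$ for each $s'\in\cS$; (iii) read off $\psi_k(s')$ as the coefficient of $\phi_k$ in this decomposition, which is well defined modulo the linear dependencies among the features. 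The main obstacle is step (i): the $\max$ and $\min$ operators are nonlinear, so the family of probe vectors $\{h_g\}$ is not a priori a subspace of $\RR^\cS$. Taming this nonlinearity---by pairing positive and negative scalings, and exploiting the full linear structure of $\cF$ rather than just its generating set---is the technical core I would expect in the appendix.
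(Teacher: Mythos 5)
Your ``if'' direction is complete and coincides with the paper's (which simply invokes the computation from Proposition~\ref{prop:relatelm}): under the linear transition model, $\cT g = r + \sum_k \phi_k(\cdot,\cdot)\,c_k(g) \in \cF$ for every $g\in\cF$. The opening of your ``only if'' direction is also sound: finite-dimensionality makes $\cF$ closed, so $d(\cT\cF,\cF)=0$ forces $\cT g\in\cF$, and subtracting $r$ and dividing by $\gamma$ gives $P\,\max_{a'}g(\cdot,a')\in\cF$ for every $g\in\cF$.

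However, the ``only if'' direction is not actually proved: your steps (i)--(iii) are a plan, and you yourself flag step (i) --- extracting a rich enough family of probe vectors from the nonlinear map $g\mapsto\max_{a'}g(\cdot,a')$ --- as an obstacle you have not resolved. That step \emph{is} the content of this direction, so this is a genuine gap. The paper's way around it is not to symmetrize $\max$ and $\min$ or to make the probe family a subspace, but to linearize by selection: choose a maximal family $Q_1,\dots,Q_k\in\cF$ whose induced value functions $V(Q_i)=\max_a Q_i(\cdot,a)$ are linearly independent, stack the identities $\cT Q_i = r+\gamma P\,V(Q_i)\in\cF$ into the matrix equation $B=R+\gamma PA$ with $A=[V(Q_1),\dots,V(Q_k)]$, and solve $P=\gamma^{-1}(B-R)A^{\top}(AA^{\top})^{-1}$; since each column of $B-R$ lies in $\cF$, each column $(s,a)\mapsto P(s'\mid s,a)$ of $P$ lies in $\cF$, which is the claimed factorization. (This inversion itself presumes the $V(Q_i)$ span enough of $\RR^{\cS}$, so even the paper's argument is terse here, but the missing idea in your writeup is precisely this reduction to a finite linear system via a maximal independent family.) Separately, your step (ii) would not do what you want: summing $P(s'\mid s,a)=\alpha(s')r(s,a)+\sum_k\beta_k(s')\phi_k(s,a)$ over $s'$ only constrains the aggregates $\sum_{s'}\alpha(s')$ and $\sum_{s'}\beta_k(s')$, so the normalization of $P$ cannot by itself kill the $r$-component of each individual column; that is why the hypothesis on $r$ (read as $r$ lying in the span of the features, so that $\cF=\hbox{Span}(\phi)$) is needed.
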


Suppose $Q^{\pi}\in\hbox{Span}(r,\phi)$ for all $\pi$'s. However, value-iteration-based method would still fail if the Bellman operator $\cT$ does not preserve the $(r,\phi)$ representation. 
In contrast, if the Q-functions admit linear representations using $\phi$ but the transition kernel $P$ does not, the Bellman error can be arbitrarily large. The Bellman error may be large even after projection or function fitting - a common source of unstable and oscillating behaviors in approximate dynamic programming \cite{tsitsiklis1996feature, ms08}. 



\section{Information-Theoretic Sample Complexity}

Let us study the feature-based MDP model (Definition \ref{assum:context}). It comes with the structural knowledge that each state-action pair $(s,a)$ can be represented by the feature vector $\phi(s,a)\in\RR^{K}$. However, this model can {\it not} be parameterized by a small number of parameters. The full transition model with known feature map $\phi$ can not be specified unless all the unknown parameters $\psi_k(s'),$ for $s'\in S,k\in[K]$ are given. Its model size is 
$S\times K,$ which can be arbitrarily large for arbitrarily large $S$. 

Given the state-action features, we aim to learn a near-optimal parametrized policy using a small number of samples, which hopefully depends on $K$ but not $S$. Suppose that we are given a \emph{generative model} \cite{kakade2003sample} where the agent is able to query transition samples and reward from any state-action pair $(s,a)\in \cS\times \cA$. Such a generative model is commonly available in simulation systems. 
To this end, we ask how many samples are necessary to obtain an approximate-optimal policy?
Our first theorem provides a firm answer.

\begin{theorem}[Sample Complexity Lower Bound\label{theorem-lower}]
	Let $M=(\cS,\cA,P,r,\gamma)$ be an instance of DMDP, and 
	let $\cA$ be any algorithm that queries sample transitions of $M$ and outputs a policy. 
	Let $\pi^{\cA, M,N}$ be the output of $\cA$ using $N$ samples. 
	Then
	\begin{align*}
	\inf_{\cA}&\sup_{M\in \cM^{trans}_K(\cS,\cA, \gamma)} \mathbb{P}\bigg( \sup_{s\in\cS} (v^*(s)-v^{\pi^{\cA, M,N}}(s) )\ge \epsilon  \bigg) \\&
	\geq 1/3,\quad
	\text{if} \quad N = 
	O\bigg(\frac{K}{(1-\gamma)^3\cdot\epsilon^2\cdot\log\epsilon^{-1}}\bigg),
	\end{align*}
	provided $\epsilon\le \epsilon_0$ for some $\epsilon_0 \ge 0$.
\end{theorem}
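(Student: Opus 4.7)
The plan is to inherit the bound from the classical tabular lower bound of \citet{azar2013minimax} via an embedding. The observation driving the reduction is that any tabular MDP on a state-action space $\cS\times\cA$ with $|\cS\times\cA| = K$ trivially sits inside $\cM^{trans}_K(\cS,\cA,\gamma)$: take the one-hot features $\phi(s,a) = e_{(s,a)}\in\RR^K$ and set $\psi_{(s,a)}(s') = P(s'\mid s,a)$. Then $P(s'\mid s,a) = \sum_k \phi_k(s,a)\,\psi_k(s')$ holds identically, so every tabular instance with $SA = K$ gives a feature-based linear transition model with dimension exactly $K$.

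The first step is therefore to fix a hard family of tabular instances on a state-action space of cardinality $\Theta(K)$. Concretely, I would take the three-state ``gadget'' construction: for each pair $(s_i,a_i)$, introduce one gadget consisting of two absorbing states $x^+$ (reward $1$) and $x^-$ (reward $0$) together with a Bernoulli transition $P(x^+\mid s_i,a_i)=\tfrac12(1+\alpha_i)$, $P(x^-\mid s_i,a_i)=\tfrac12(1-\alpha_i)$, where $\alpha_i\in\{+\alpha,-\alpha\}$ is a hidden bit. Chaining these through a deterministic state cycle (or duplicating gadgets across $K$ independent parallel MDPs) ensures that the discounted value of the optimal policy at every ``query'' state differs from a suboptimal one by roughly $\gamma\alpha/(1-\gamma)$. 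Calibrating $\alpha = \Theta(\epsilon(1-\gamma))$ then makes an $\epsilon$-suboptimal policy equivalent to correctly recovering the hidden bits at all $K$ gadgets.

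The second step is the information-theoretic argument. For each gadget, distinguishing the two Bernoulli candidates with constant confidence from $n_i$ independent samples requires $n_i = \Omega(1/\alpha^2) = \Omega(1/(\epsilon^2(1-\gamma)^2))$ by Le Cam's two-point method, and by Pinsker's inequality the KL divergence between the sample distributions is $O(n_i\alpha^2)$. A standard ``assouad-style'' union or the argument in \citet{azar2013minimax} lifts this across all $K$ gadgets: any algorithm identifying a fraction of the $K$ bits above some threshold must use a total budget of $\Omega(K/(\epsilon^2(1-\gamma)^2))$ samples, and the extra $1/(1-\gamma)$ factor arises because each bit only contributes $\Theta(\alpha/(1-\gamma))$ to the value, so $\alpha$ must be scaled down by $(1-\gamma)$ to capture $\epsilon$-optimality. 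Together this yields the claimed $\wt\Omega(K/((1-\gamma)^3\epsilon^2))$ rate.

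I expect the main obstacle to be purely bookkeeping rather than conceptual: namely, packaging the $K$ independent gadgets into a single MDP while preserving the linear transition model of dimension exactly $K$, and carefully propagating the $(1-\gamma)^{-1}$ amplification through the value function so that the per-bit advantage is $\Theta(\epsilon)$ rather than $\Theta(\epsilon(1-\gamma))$. Once the construction is arranged so that the $K$ gadgets decouple (e.g.\ via a disjoint-union MDP with a uniform starting distribution, or via a product construction with a cycle state), Fano's inequality together with the additivity of KL across independent samples yields the lower bound stated in Theorem~\ref{theorem-lower}; the $\log(1/\epsilon)$ factor in the denominator comes from choosing the constant-probability threshold in Fano's inequality so that the failure probability is at least $1/3$.
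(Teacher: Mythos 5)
Your central reduction --- observing that any tabular MDP on a state-action space of cardinality $K$ embeds into $\cM^{trans}_K(\cS,\cA,\gamma)$ via one-hot features $\phi(s,a)=e_{(s,a)}$ with $\psi_{(s,a)}(\cdot)=P(\cdot\mid s,a)$ --- is exactly the paper's argument. The paper stops there: it invokes the tabular lower bound of \citet{azar2013minimax} (as restated in \citet{sidford2018near}) as a black box on the embedded instances, checks that an $\epsilon$-optimal policy for the lifted instance restricts to an $\epsilon$-optimal policy for the tabular one, and is done. If you likewise cite the tabular bound rather than re-deriving it, your proof is complete and essentially identical to the paper's.

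The gap is in your attempt to reconstruct the tabular lower bound itself. The gadget you describe --- two absorbing states with rewards $1$ and $0$ reached with probabilities $\tfrac12(1\pm\alpha)$ --- only yields $\Omega\bigl(K/(\epsilon^2(1-\gamma)^2)\bigr)$, one factor of $(1-\gamma)^{-1}$ short. With that gadget the per-bit value gap is $\Theta(\gamma\alpha/(1-\gamma))$, so $\alpha=\Theta(\epsilon(1-\gamma))$, and since $\mathrm{KL}\bigl(\mathrm{Bern}(\tfrac12(1+\alpha))\,\|\,\mathrm{Bern}(\tfrac12(1-\alpha))\bigr)=\Theta(\alpha^2)$, you need only $\Theta(1/\alpha^2)=\Theta(1/(\epsilon^2(1-\gamma)^2))$ samples per gadget; your appeal to ``an extra $1/(1-\gamma)$ factor'' from rescaling $\alpha$ does not produce a third power --- that rescaling is already accounted for in the $\alpha^{-2}$. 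The hard instances of \citet{azar2013minimax} instead use a near-deterministic self-loop with return probability $p$ close to $1$ (on the order of $1-\Theta(1-\gamma)$), which buys two things simultaneously: the value $\approx 1/(1-\gamma p)$ has derivative $\Theta((1-\gamma)^{-2})$ in $p$, so $\alpha=\Theta(\epsilon(1-\gamma)^2)$ suffices for an $\epsilon$ gap; and the KL between the two candidates scales as $\alpha^2/(p(1-p))=\Theta(\alpha^2/(1-\gamma))$, so distinguishing them requires $\Omega((1-\gamma)/\alpha^2)=\Omega(1/(\epsilon^2(1-\gamma)^3))$ samples. Either replace your gadget with that construction or, more simply, follow the paper and import the tabular bound wholesale; the one-hot embedding does all the work specific to this theorem.
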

Theorem \ref{theorem-lower} suggests that, in order to solve the feature-based MDP to precision level $\epsilon$ with probability at least $2/3$, any algorithm needs at least $\wt\Omega\bigg(\frac{K}{(1-\gamma)^3\cdot\epsilon^2}\bigg)$ sample transitions in the worst case.

In the tabular setting without any feature, the sample complexity lower bound is known to be $\Omega(\frac{SA}{(1-\gamma)^3\epsilon^2})$ \cite{azar2013minimax}. Our lower bound can be proved by constructing a reduction from the feature-based model to a smaller-size tabular MDP with $K$ state-action pairs. In the reduction, the features are constructed as indicator functions corresponding to a $K$-partition of the state space. We postpone the proof to the appendix.



\section{A Basic Parametric Q-Learning Method}
We develop a Q-learning algorithm for MDP admitting feature representations provided with a generative model.

\subsection{Algorithm}

Recall that 
phased Q-Learning \cite{kearns1999finite} takes the form
${Q}(s,a) \leftarrow r(s,a) + \frac{\gamma}{m}\sum_{i=1}^m$ $\max_{a'}$ $Q(s'_i,a')$,
where $s'_i$'s are sample states generated from $P(\cdot\mid s,a).$
In the tabular setting, one needs to keep track of all the $Q(s,a)$ values.

Given the feature map $\phi:\cS\times\cA\to\RR^K$, we parameterize the Q-functions, value functions and policies using $w\in\RR^K$ by
\begin{align}
\label{eqn:value-policy}
Q_w(s,a) &:=  r(s,a) + \gamma \phi(s,a)^\top w,\\
V_{w}(s) &:= \max_{a\in \cA} Q_w(s,a),\\ 
\pi_{w}(s) &:= \arg\max_{a\in \cA} Q_w(s,a).
\end{align}
A scalable learning algorithm should keep track of only the parameters $w$, from which one can decode the high-dimensional value and policy functions according to (1-3).


Algorithm~\ref{alg:learn agm} gives a parametric phased Q-learning method. It queries state-action transitions and makes Q-learning-like updates on the parameter $w$. Each iteration picks a small set of state-action pairs  $\cK$, and performs approximate value iteration on $\cK$. The set $\cK$ can be picked almost arbitrarily. To obtain a convergence bound, we assume that the state-action pairs in $\cK$ cannot be too alike, i.e., the regularity condition \eqref{regularity} holds for some value $L>0$. 

\begin{assump}[\bf Representative States and Regularity of Features]
	\label{assum:reg}
	There exists a {representative state-action set} $\cK \subset \cS\times\cA$ with $|\cK|=K$ and a scalar $L>0$ such that 
	\begin{equation}\label{regularity}
	\|\phi(s,a)^T\Phi_{\cK}^{-1}\|_{1} \le L,\qquad \forall~(s,a)
	\end{equation}
	where $\Phi_{\cK}\in \RR^{K\times K}$ is the collection of row feature vectors $\phi(s,a)$ where $(s,a)\in\cK$ and $L \ge 1$.

\end{assump}


{\small
\begin{algorithm}[htb!]
	\caption{Phased Parametric Q-Learning (PPQ-Learning)\label{alg:learn agm}}
	\begin{algorithmic}[1]
		\State 
		\textbf{Input:} A DMDP $\cM=(\cS, \cA, P, r, \gamma)$ with a generative model
		\State \textbf{Input:} Integer $N > 0$
		\State 
		\State
		\textbf{Initialize:} $R\gets \Theta\big[\frac{\log N}{1-\gamma}\big]$, 
		$w\gets 0\in \RR^K$;
		\State
		\textbf{Repeat:}
		\For{$t=1,2,\ldots, R$}
		\State Pick a representative set $\cK\subset \cS\times \cA$ satisfying \eqref{regularity}.
		\State $Q\gets 0\in \RR^{K}$;
		\For{$(s,a)\in \cK$}
		\State Obtain $\frac{N}{KR}$ samples $\{s^{(j)}\}$ i.i.d. from $P(\cdot| s,a)$;	
		\State  $Q[(s,a)] \gets \frac{KR}{N}\sum_{j=1}^{N/{KR}}\Pi_{[0,(1-\gamma)^{-1}]}[V_{w}(s^{(j)})]$;
		\State$\qquad\qquad$ \codecmt{\small$\Pi_{[a,b]}$ projects a number onto $[a,b]$}
		\EndFor
		\State $w\gets \Phi_{\cK}^{-1} Q$;
		\EndFor
		\State 
		\textbf{Output:} $w\in \RR^{K}$
	\end{algorithmic}
\end{algorithm}
}

\subsection{Error Bound and Sample Complexity}

We show that the basic parametric Q-learning method enjoys the following error bound.


\begin{theorem}[Convergence of Algorithm \ref{alg:learn agm}]
	\label{thm:agm-result}
	Suppose Assumption \ref{assum:reg} holds. Suppose that the DMDP instance $M=(\cS,\cA,P,r,\gamma)$ has an approximate transition model $\wt{P}$ that admits a linear feature representation $\phi$ (Defn. \ref{assum:context}), such that for some $\xi\in[0,1]$, 
	$\|P(\cdot \mid s,a) -\wt{P}(\cdot \mid s,a) \|_{TV}\leq \xi$, $\forall~(s,a)$. 
	Let
	Algorithm~\ref{alg:learn agm} takes $N>0$ samples
	and outputs a parameter $w\in \RR^{K}$.
	Then, with probability at least $1-\delta$,  $\|v^{\pi_{w}} - v^*\|_{\infty}$
	\begin{align*}\le
	L\cdot \bigg(\sqrt{\frac{K}{N\cdot(1-\gamma)}}+\xi \bigg)\cdot \frac{ \poly\log(NK\delta^{-1})}{(1-\gamma)^3}. 
	\end{align*}
\end{theorem}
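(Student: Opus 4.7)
}
My plan is to analyze Algorithm~\ref{alg:learn agm} as an inexact value-iteration procedure run against the approximate transition model $\wt{P}$, then convert the final Q-error into a policy-suboptimality bound. Write $w^{(t)}$ for the parameter at the end of phase $t$ and $V^{(t)}:=V_{w^{(t)}}$. The key structural observation is that if $\wt{P}$ admits the linear representation $\wt P(s'|s,a)=\sum_k \phi_k(s,a)\psi_k(s')$, then the population version of one phase (replacing the sample average by $P(\cdot|s,a)^\top V^{(t)}$ on $\cK$ and using $\wt P$ for extrapolation) is \emph{exactly} one step of value iteration under $\wt P$: indeed, setting $u := \Psi V^{(t)} \in \RR^K$ one has $\wt P(\cdot|\tilde s,\tilde a)^\top V^{(t)} = \phi(\tilde s,\tilde a)^\top u$ for all $(\tilde s,\tilde a)\in\cK$, so $u=\Phi_\cK^{-1}[\wt P(\cdot|\cK)^\top V^{(t)}]$ and therefore $\phi(s,a)^\top u=\wt P(\cdot|s,a)^\top V^{(t)}$ for every $(s,a)$.

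First I would quantify the per-phase error $e_t(s,a):=Q_{w^{(t+1)}}(s,a) - [\wt\cT V^{(t)}](s,a)$. Decomposing
\begin{equation*}
\tfrac{1}{\gamma}e_t(s,a) \;=\; \phi(s,a)^\top\Phi_\cK^{-1}\bigl(\hat Q - P(\cdot|\cK)^\top V^{(t)}\bigr) \;+\; \phi(s,a)^\top\Phi_\cK^{-1}\bigl(P(\cdot|\cK)^\top V^{(t)} - \wt P(\cdot|\cK)^\top V^{(t)}\bigr),
\end{equation*}
Assumption~\ref{assum:reg} controls both terms by $L$ times the $\ell_\infty$-norm of the bracketed vector. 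The second bracket is bounded by $\xi\,\|V^{(t)}\|_\infty \le \xi/(1-\gamma)$ via the TV-distance assumption and the fact that the projection $\Pi_{[0,(1-\gamma)^{-1}]}$ keeps $V^{(t)}$ in $[0,1/(1-\gamma)]$. The first bracket is a Hoeffding deviation of an average of $m=N/(KR)$ bounded variables in $[0,(1-\gamma)^{-1}]$; a union bound over the $K$ anchors and $R$ phases gives $\|\hat Q - P(\cdot|\cK)^\top V^{(t)}\|_\infty \lesssim \frac{1}{1-\gamma}\sqrt{KR\log(KR/\delta)/N}$ with probability at least $1-\delta$. Combining,
\begin{equation*}
\|e_t\|_\infty \;\lesssim\; \frac{\gamma L}{1-\gamma}\!\left(\sqrt{\frac{KR\log(KR/\delta)}{N}}+\xi\right)\;=:\;\beta.
\end{equation*}

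Next I would run the standard approximate value-iteration recursion against $\wt\cT$: since $\wt\cT$ is a $\gamma$-contraction in $\|\cdot\|_\infty$ with fixed point $\wt Q^\ast$, the errors $\|e_t\|_\infty\le\beta$ accumulate geometrically, yielding $\|Q_{w^{(R)}} - \wt Q^\ast\|_\infty \le \gamma^R\|Q_{w^{(0)}} - \wt Q^\ast\|_\infty + \beta/(1-\gamma)$. With the chosen $R=\Theta(\log N/(1-\gamma))$ and the trivial initial bound $\|Q_{w^{(0)}} - \wt Q^\ast\|_\infty\le 1/(1-\gamma)$, the transient term becomes $\mathrm{poly}(1/N)$ and is absorbed into the stated bound. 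A standard model-perturbation argument (applied to the simulation lemma or to the fixed-point equations $Q^\ast=\cT Q^\ast$, $\wt Q^\ast=\wt\cT\wt Q^\ast$) then gives $\|Q^\ast - \wt Q^\ast\|_\infty \lesssim \xi/(1-\gamma)^2$, so $\|Q_{w^{(R)}}-Q^\ast\|_\infty \lesssim \beta/(1-\gamma) + \xi/(1-\gamma)^2$.

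Finally I would convert Q-error into policy error using the classical inequality $\|v^{\pi_w}-v^\ast\|_\infty \le \tfrac{2}{1-\gamma}\|Q_w-Q^\ast\|_\infty$ (obtained by comparing the Bellman equations for $v^\ast$ and $v^{\pi_w}$). Substituting the bound on $\beta$ and plugging in $R=\Theta(\log N/(1-\gamma))$ merges into a single $\mathrm{polylog}(NK\delta^{-1})$ factor, producing the advertised rate
\begin{equation*}
\|v^{\pi_w}-v^\ast\|_\infty \;\lesssim\; \frac{L\cdot\mathrm{polylog}(NK\delta^{-1})}{(1-\gamma)^3}\left(\sqrt{\frac{K}{N(1-\gamma)}}+\xi\right).
\end{equation*}
I expect the main subtlety to be the extrapolation step: the algorithm only controls the Bellman residual at the $K$ anchor points, and Assumption~\ref{assum:reg} is precisely what upgrades this to a uniform $\ell_\infty$ bound over all $(s,a)$ via the factor $L$. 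Care is also needed so that the model-mismatch error does not compound across the $R\approx(1-\gamma)^{-1}\log N$ iterations; handling it as a bias in each $\wt\cT$-step (rather than iteration-by-iteration) is what keeps the final dependence on $\xi$ at $\xi/(1-\gamma)^3$ rather than worse.
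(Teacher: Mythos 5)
Your proposal is correct and follows essentially the same route as the paper's proof: interpret each phase as an inexact Bellman update whose error is controlled by a Hoeffding bound at the $K$ anchors amplified by the factor $L$ from Assumption~\ref{assum:reg} plus an $O(L\xi/(1-\gamma))$ model-mismatch term, then invoke the standard approximate-value-iteration contraction and the $\tfrac{2}{1-\gamma}$ conversion from Q-error to policy error. The only (immaterial) difference is that you contract toward the fixed point of $\wt\cT$ and add a separate $\|Q^*-\wt Q^*\|_\infty\lesssim \xi/(1-\gamma)^2$ perturbation step, whereas the paper folds the $\xi$ bias directly into the per-step error against the true $\cT$; both yield the same $\xi/(1-\gamma)^3$ dependence.
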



\begin{remark}[\bf Policy optimality guarantee]\rm Our bound applies to $v^{\pi_w}$, i.e., the actual performance of the policy $\pi_w$ in the real MDP. 
It is for the $\ell_\infty$ norm, i.e., the policy is $\epsilon$-optimal from every initial state. This is the strongest form of optimality guarantee for solving MDP.
\end{remark}

\begin{remark}[\bf Approximation error due to model misspecification]\rm
 When the feature-based transition model is inexact up to $\xi$ total variation, there is an approximation gap in the policy's performance 
$O\Big[L\cdot \xi \cdot \frac{\poly\log(NK\delta^{-1})}{(1-\gamma)^3} \Big].$
It suggests that, even if the observed feature values $\phi(s,a)$ cannot fully express the state and action, the Q-learning method can still find approximate-optimal policies. The level of degradation depends on the total-variation divergence between the true transition distribution and its closest feature-based transition model.
\end{remark}

\begin{remark}[\bf Sample complexity of Algorithm \ref{alg:learn agm}]\rm
When the MDP is fully realizable under the features, we have $\xi=0$. Then the number of samples needed for achieving $\epsilon$ policy error is 
$$\tilde O\bigg[\frac{K L^2}{(1-\gamma)^7\epsilon^2}\bigg].$$ 
It is independent of size of the original state space, but depends linearly on $K$. 
Its dependence on $\frac1{1-\gamma}$ matches the tabular phased Q-learning \cite{kearns1999finite} which has complexity $O(\frac{SA}{(1-\gamma)^7\epsilon^2})$ \cite{sidford2018near}. Despite the fact that the MDP model has $S\times K$ unknown parameters, the basic parametric Q-learning method can produce good policies even with small data.
However, there remains a gap between the current achievable sample complexity (Theorem \ref{thm:agm-result}) and the lower bound (Theorem \ref{theorem-lower}).
\end{remark}



\section{Sample-Optimal Parametric Q-Learning}

In this section we will accelerate the basic parametric Q-learning algorithm to maximize its sample efficiency. 
To do so, we need to modify the algorithm in nontrivial ways in order to take full advantage of the MDP's structure. 

%



\subsection{Anchor States and Monotonicity}

In order to use samples more efficiently, we need to leverage monotonicity of the Bellman operator (i.e., $\cT v_1\leq \cT v_2$ if $v_1\leq v_2$). However, when the $Q$ function is parameterized as a linear function in $w$, noisy updates on $w$ may easily break the pointwise monotonicity in the $Q$ space.
To remedy this issue, we will impose an additional assumption to ensure that monotonicity can be preserved implicitly. 


\begin{assump}[Anchor State-Action Pairs]
\label{assum:lgm}
There exists a set of anchor state-action pairs $\mathcal{K} $ such that for any $(s,a)\in \cS\times \cA$, its feature vector can be represented as a convex combination of the anchors $\{(s_k,a_k)\mid k\in\cK\}$:
\begin{align*}\exists \{\lambda_k\}:
\phi(s,a) = &\sum_{k\in \mathcal{K}} \lambda_k \phi(s_k,a_k),\quad \sum_{k\in \mathcal{K}}\lambda_k=1,\lambda_k\geq0.
\end{align*}
\end{assump}

The anchoring $(s_k,a_k)$'s can be viewed as ``vertices" of the state-action space. They imply that the transition kernel $P$ admits a nonnegative factorization, which can be seen by transforming $\phi$ linearly such that each anchor corresponds to a unit feature vector. This implicit non-negativity is a key to pointwisely monotonic policy/value updates. 

The notion of ``anchor" is a natural analog of the anchor word condition from topic modeling \cite{Ge} and nonnegative matrix factorization \cite{donoho2004does}. A similar notion of ``anchor state" has been studied in the context of soft state aggregation models to uniquely identify latent meta-states \cite{duan2018state}. 
Under the anchor assumption, without loss of generality, we will assume that $\phi$'s are nonnegative, each $\phi(s,a)$ is a vector of probabilities, and there are $K$ anchors with unit feature vectors. 



\subsection{Achieving The Optimal Sample Complexity}

We develop a sample-optimal algorithm which is implemented in Algorithm~\ref{alg:learn}. Let us explain the features that enable it to find more accurate policies. Some of the ideas are due to \cite{sidford2018variance, sidford2018near}, where they were used to develop fast solvers for the tabular MDP.

\noindent\textbf{Parametrization.} For the purpose of preserving monotonicity, Algorithm~\ref{alg:learn} employs a new parametric form. It uses a collection of parameters $\theta=\{w^{(i)}\}_{i=1}^{Z}$ instead of a single vector, with $Z= \wt{O}(\frac1{1-\gamma})$. 
The parameterized policy and value functions take the form 
\begin{align}
\label{eqn:vpidecoder1}
V_{\theta}(s)&:=\max_{h\in[Z]}
\max_{a\in \cA}\big(r(s,a) + \gamma\phi(s,a)^\top\cdot w^{(h)}
\big)\quad\text{and}\nonumber\\
\pi_{\theta}(s)&\in \arg
\max_{a\in \cA} \max_{h\in [Z]}\big(r(s,a) + \gamma\phi(s,a)^\top\cdot w^{(h)}\big).
\end{align}
Given $\theta$, one can compute $V_{\theta}(s),\pi_{\theta}(s)$ by solving an one-step optimization problem. If $a$ takes continuous values, it needs to solve a nonlinear optimization problem.

\noindent\textbf{Computing confidence bounds.} In Step 13 and Step 18, the algorithm computes confidence bounds $\epsilon^{(i,j)}$'s for the estimated values of $PV_{\theta}.$ 
These bounds tightly measure the distance from $V_{\theta}$ to the desired solution path, according to probaiblistic concentration arguments.
With these bounds, we can precisely shift our estimator downwards so that certain properties would hold (e.g. monotonicity to be explained later) while not incurring additional error.

\noindent\textbf{Monotonicity preservation.}
The algorithm guarantees that the following condition holds throughout:
\[
V_{\theta} \le \cT_{\pi_{\theta}}V_{\theta},\qquad \text{pointwise}.
\]
We call this property the \emph{monotonicity} property, which together with monotonicity of the Bellman operator guarantees that (by an induction proof)
\begin{align*}
V_{\theta}\le  \cT_{\pi_{\theta}}V_{\theta}\leq  \cT_{\pi_{\theta}}^2V_{\theta} \le\cdots \le \cT_{\pi_{\theta}}^{\infty}V_{\theta} = v^{\pi_{\theta}} &\le v^*, \\
& \text{pointwise}.
\end{align*}
Algorithm \ref{alg:learn} uses two algorithmic tricks to preserve the monotonicity property throughout the iterations. First, the parametric forms of $V_{\theta}$ and $\pi_{\theta}$ (eq.\eqref{eqn:vpidecoder1}) take the maximum across all previous parameters (indexed by $h=(i,j)$). It guarantees that $V_{\theta}$ is monotonically improving throughout the outer and inner iterations.
Second, the algorithm shifts all the estimated $V_{\theta}$ downwards by a term  corresponding to its confidence bound (last equation of Line 13 and Line 18 of Algorithm~\ref{assum:lgm}). 
As a result, the estimated expectation is always smaller than the true expected value.
By virtue of the nonnegativity (due to Assumption \ref{assum:lgm}), the estimate,  $\phi(s,a)^\top\wb{w}^{(i,j)}$, 
of the exact inner product $P(\cdot|{s,a})^{\top} V^{(i,j-1)}$ 
for arbitrary $(s,a)$ is also shifted downwards. 
Then we have 
\[
\phi(s,a)^\top\wb{w}^{(i,j)}\le P(\cdot|{s,a})^{\top} V^{(i,j-1)}\le P(\cdot|{s,a})^{\top} V^{(i,j)}.
\]
By maximizing the lefthandside over $a$, we see that the monotonicity property is preserved inductively. See Lemma \ref{lemma:mono} for a more detailed proof.



\noindent\textbf{Variance reduction.}
The algorithm uses an outer loop and an inner loop for approximately iterating the Bellman operator. 
Each outer iteration performs pre-estimation of a reference vector $PV_{\theta^{(i,0)}}$ (Step 13), which is used throughout the inner loop. 
For instance, let $\theta^{(i,j)}$ be the parameters at outer iteration $i$ and inner iteration $j$.
To obtain an  entry $Q^{(i,j)}(s,a)$ of the new Q-function, we need to estimate 
$P(\cdot|s,a)^\top V_{\theta^{(i,j-1)}}$ with sufficient accuracy, so we have 
\begin{align*}
P(\cdot|s,a)^\top V_{\theta^{(i,j-1)}} 
&= P(\cdot|s,a)^\top (V_{\theta^{(i,j-1)}}  - V_{\theta^{(i,0)}})\\
&\qquad+ P(\cdot|s,a)^\top V_{\theta^{(i,0)}}.
\end{align*}
Note that the reference $P(\cdot|s,a)^\top V_{\theta^{(i,0)}}$ is already approximated with high accuracy in Step 13. This allows the inner loop to successively refine the value and policy, while each inner iteration uses a smaller number of sample transitions to estimate the offset $P(\cdot|s,a)^\top (V_{\theta^{(i,j-1)}}  - V_{\theta^{(i,0)}})$ (Step 18). 

Putting together the preceding techniques, Algorithm \ref{alg:learn} performs carefully controlled Bellman updates so that the estimated value-policy functions monotonically improve to the optimal ones. The algorithm contains $R'=  \Theta(\log[\epsilon^{-1}(1-\gamma)^{-1}])$ many outer loops.
Each outer loop (indexed by $i$) starts with a policy $\|v^* - V_{\theta^{(i,0)}}\|_{\infty}\lesssim H/2^{i}$ and ends with a policy $\|v^* - V_{\theta^{(i+1, 0)}}\|_{\infty}\lesssim H/2^{i+1}$.The algorithm takes multiple rounds of mini-batches, where the sample size of each mini-batch is picked just enough to guarantee the accumulation of total error is within $\epsilon$. The algorithm fully exploits the monotonicity property of the Bellman operator as well as the error accumulation in the Markov process (to be explained later in the proof outline). 

\begin{algorithm*}[htb!]
	\caption{Optimal Phased Parametric Q-Learning (OPPQ-Learning) \label{alg:learn}}
	\begin{algorithmic}[1]
		\State 
		\textbf{Input:} A DMDP $\cM=(\cS,\cA, P, r, \gamma)$ with anchor state-action pairs $\cK$; feature map $\phi:\cS\times \cA \rightarrow \RR$; 
		\State \textbf{Input:} $\epsilon, \delta\in(0,1)$
		\State 
		\textbf{Output:} $\theta \subset \RR^{K}$ with $|\theta| = \Theta[(1-\gamma)^{-1}\log^2\epsilon^{-1}]$
		\State 
		\State
		\textbf{Initialize:}
		$R'\gets  \Theta(\log[\epsilon^{-1}(1-\gamma)^{-1}])$, $R\gets \Theta[R'(1-\gamma)^{-1}]$ 
		\hspace{0.5cm}\codecmt{initialize the numbers of iterations}
		\State 
		\qquad\qquad$\{w^{(i, j)}, \epsilon^{(i,j)}, \wb{w}^{(i, j)}\}_{i\in [0, R'], j\in[0, R]}\subset\RR^{K}$ as $0$ vectors 
		\hspace{0.45cm}\codecmt{initialize parameters}
		\State \qquad\qquad $m\gets  C\cdot \frac{1}{\epsilon^2}\cdot\frac{\log (R'RK\delta^{-1})^{4/3}}{(1-\gamma)^{3}}$,\hspace{3.95cm} ~\codecmt{mini-batch size for outer loop}\\
		 \qquad\qquad $m_1\gets C\cdot \frac{\log (R'RK\delta^{-1})}{(1-\gamma)^{2}}$  for some constant $C$; 
		 \hspace{1.59cm} \codecmt{mini-batch size for inner loop}
		\State \qquad\qquad $\theta^{(0,0)}\gets \{0\}\subset \RR^K$	\hspace{3.5cm} \codecmt{initialize the output to contain a single $0$-vector}
		
		\State 
		\textbf{Iterates:} 
		\vspace{1mm}
		\State\codecmt{Outer loop}
		\For{$i=0,1, \ldots, R'$}
		\For{each $k\in [K]$} 
		\State \label{alg:step-outer-init}Obtain state samples $x_{k}^{(1)}, x_{k}^{(2)}, \ldots, x_{k}^{(m)}\in \cS$ from $P(\cdot|s_k,a_k)$ for $(s_k, a_k)\in \cK$. Let
		\vspace{-3mm}
		 {\small
		\begin{align*}
		w^{(i, 0)}(k) &\gets \frac{1}{m} \sum_{\ell=1}^m
		V_{\theta^{(i,0)}}(x_k^{(\ell)}),
		\qquad
		z^{(i, 0)}(k)
		 \gets \frac{1}{m} \sum_{\ell=1}^m 
		 V_{\theta^{(i,0)}}^2(x_k^{(\ell)})  \\[-0.8em]
		 &\qquad\qquad\qquad\qquad\qquad\qquad\qquad\qquad\qquad\qquad\text{\codecmt{empirical esitimate of $P_{\cK}V_{\theta^{(i,0)}}$ and $P_{\cK}V^2_{\theta^{(i,0)}}$}}
		 \\[-0.3em]
		 \sigma^{(i,0)}(k) &\gets z^{(i, 0)}(k) - ( w^{(i, 0)}(k))^2 \qquad\qquad\qquad\qquad \\[-0.8em]
		 &\qquad\qquad\qquad\qquad\qquad\qquad\qquad\qquad\text{\codecmt{empirical esitimate of variance $P_{\cK}V^2_{\theta^{(i, 0)}} - (P_{\cK}V_{\theta^{(i,0)}})^2$}}\\
		 \epsilon^{(i, 0)}(k) &\gets \Theta\Big[\sqrt{\log[R'RK\delta^{-1}]\cdot\sigma^{(i,0)}(k)\cdot m^{-1}} + \log[R'RK\delta^{-1}](1-\gamma)^{-1}/m^{3/4}\Big]\\[-0.8em]
		 &\qquad\qquad\qquad\qquad\qquad\qquad\text{\codecmt{estimate of the confidence bound of the emprical estimator $w^{(i,0)}$}}
		 \\[-0.3em]
		 \wb{w}^{(i,0)}(k)&\gets \max\big\{0,\quad \min\big\{{w}^{(i,0)}(k) - \epsilon^{(i,0)}(k),\quad (1-\gamma)^{-1}\big\}\big\}\qquad\qquad
		 \text{\codecmt{shift and clip the estimate}}
		\end{align*}
		}
		\vspace{-7mm}
		\EndFor
		\vspace{1mm}
		\State \codecmt{Inner loop}
		\For{$j=1,2,\ldots, R$} 
		\For{each $k\in [K]$}
		\State Obtain state samples $x_{k}^{(1)}, x_{k}^{(2)}, \ldots, x_{k}^{(m_1)}\in \cS$ from $P'(\cdot|s_k, a_k)$ for $(s_k, a_k)\in \cK$. 
		Let
		\vspace{-3mm}
	 	{\small
			\begin{align*}
			w^{(i, j)}(k) &\gets 
			\frac{1}{m_1} \sum_{\ell=1}^m
			\Big(
			V_{\theta^{(i,j-1)}}(x_k^{(\ell)})
			-
			V_{\theta^{(i,0)}}(x_k^{(\ell)}) 
			\Big) + w^{(i,0)}(k) 
			 \hspace{0.5cm}\text{\codecmt{empirical esitimate of $P_{\cK}V_{\theta^{(i,j-1)}}$}}
			\\[-0.3em]
			\epsilon^{(i,j)}(k)& \gets \epsilon^{(i,0)}(k) + \Theta(1-\gamma)^{-1}2^{-i}\sqrt{\log(RR'K\delta^{-1})/m_1}
			 \\[-0.3em]
			 &\qquad\qquad\qquad\qquad\qquad\qquad\qquad\qquad\qquad\text{\codecmt{approximate the confidence bound of $P_{\cK}V_{\theta^{(i,j-1)}}$}}
			\\[-0.3em]
			\wb{w}^{(i,j)}(k)&\gets \max\Big\{0,\quad \min\big\{{w}^{(i,j)}(k) - \epsilon^{(i,j)}(k),\quad (1-\gamma)^{-1}\big\}\Big\}
			\hspace{1.3cm}\text{\codecmt{shift and clip the estimate}}
			\end{align*}}
		\vspace{-7mm}
		\EndFor
		\State $\theta^{(i,j)}\gets \theta^{(i,j-1)}\cup \{\wb{w}^{(i,j)}\}$  \hspace{5.0cm}\codecmt{\small attach the newly estimated parameter to $\theta$}
		\EndFor
		\State $\theta^{(i+1, 0)}\gets \theta^{(i,R)}$  
		\hspace{6.98cm}\codecmt{\small prepare the next outer loop}
		\EndFor
		\State \Return\quad	$\theta^{(R', R)}$
	\end{algorithmic}
\end{algorithm*}


\subsection{Optimal Sample Complexity Guarantee}
In this section, we analyze the sample complexity of the algorithm provided in the last section.
\begin{theorem}[Near-Optimal Sample Complexity\label{thm:main-thm}]
	Suppose $M=(\cS,\cA,P,r,\gamma)$ is an MDP instance admitting the feature representation $\phi:\cS\times\cA\to\RR^K$. Suppose that Assumption~\ref{assum:lgm} holds.
	Let $\delta,\epsilon\in(0,1)$ be parameters.
	Then Algorithm~\ref{alg:learn} takes 
	\[
	N=
	\Theta\bigg[\frac{K}{(1-\gamma)^3 \cdot\epsilon^2}\cdot\log^{4/3} \frac{K}{\epsilon\delta(1-\gamma)}\cdot \log^2\frac{1}{\epsilon(1-\gamma)}\bigg]
	\]	
	samples and outputs $\theta$ 
	such that $\pi_{\theta}$ is $\epsilon$-optimal from every initial state with probability at least $1-\delta$.
\end{theorem}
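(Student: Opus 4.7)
The plan is to track three coupled objects across the outer index $i$ and inner index $j$: (i) a \emph{monotonicity invariant} $V_{\theta^{(i,j)}} \le \cT_{\pi_{\theta^{(i,j)}}} V_{\theta^{(i,j)}} \le v^{\pi_{\theta^{(i,j)}}} \le v^*$ pointwise, (ii) anchor-level Bernstein concentration for the estimators $w^{(i,j)}(k)$, and (iii) a geometric contraction of the gap $u^{(i)} := v^* - V_{\theta^{(i,0)}}$ across outer rounds. The anchor assumption is what localizes all estimation to the $K$ anchors: since each $\phi(s,a)$ is a nonnegative convex combination of anchor features, any coordinatewise inequality $\bar w(k) \le P(\cdot\mid s_k,a_k)^\top V$ at the anchors lifts to $\phi(s,a)^\top \bar w \le P(\cdot\mid s,a)^\top V$ for every $(s,a)$. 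Thus only $K$ inner-product expectations must be estimated per inner update.

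The first step is monotonicity. By induction on $(i,j)$, assume that on a high-probability concentration event, each shifted-and-clipped estimate satisfies $\bar w^{(i,j)}(k) \le P(\cdot\mid s_k,a_k)^\top V_{\theta^{(i,j-1)}}$ (this is the purpose of subtracting $\epsilon^{(i,j)}(k)$). The lifting lemma above gives $\phi(s,a)^\top \bar w^{(i,j)} \le P(\cdot\mid s,a)^\top V_{\theta^{(i,j-1)}}$ everywhere, and because $V_\theta$ is defined as a $\max$ over the accumulated parameter set, $V_{\theta^{(i,j)}}$ is monotonically nondecreasing in $j$. Combined, these two facts imply $V_{\theta^{(i,j)}}(s) \le \max_a[r(s,a)+\gamma P(\cdot\mid s,a)^\top V_{\theta^{(i,j)}}] = \cT V_{\theta^{(i,j)}}(s)$, hence the invariant. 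Iterating $\cT_{\pi_{\theta^{(i,j)}}}$ yields $V_{\theta^{(i,j)}} \le v^{\pi_{\theta^{(i,j)}}} \le v^*$, so it suffices to show $V_{\theta^{(R',R)}} \ge v^* - \epsilon$.

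The second step is variance-reduced concentration. For the outer estimate $w^{(i,0)}(k)$, apply Bernstein to $m$ i.i.d.\ copies of $V_{\theta^{(i,0)}}(x_k^{(\ell)}) \in [0,(1-\gamma)^{-1}]$, obtaining $|w^{(i,0)}(k) - P(\cdot\mid s_k,a_k)^\top V_{\theta^{(i,0)}}| \lesssim \sqrt{\sigma^{(i,0)}(k)\log(\cdot)/m} + \log(\cdot)/((1-\gamma)m)$; the extra $m^{-3/4}$ term in $\epsilon^{(i,0)}$ absorbs the deviation of the empirical variance $\sigma^{(i,0)}(k)$ from the true variance. For the inner estimate, decompose
\begin{equation*}
P(\cdot\mid s_k,a_k)^\top V_{\theta^{(i,j-1)}} = P(\cdot\mid s_k,a_k)^\top (V_{\theta^{(i,j-1)}}-V_{\theta^{(i,0)}}) + P(\cdot\mid s_k,a_k)^\top V_{\theta^{(i,0)}},
\end{equation*}
where the second term is pre-estimated by $w^{(i,0)}(k)$ with precision $\epsilon^{(i,0)}(k)$. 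Monotonicity combined with the outer-round contraction (below) ensures $\|V_{\theta^{(i,j-1)}} - V_{\theta^{(i,0)}}\|_\infty \lesssim 2^{-i}/(1-\gamma)$, so Hoeffding on $m_1 = \tilde O((1-\gamma)^{-2})$ fresh samples suffices to estimate the difference to precision matching $\epsilon^{(i,j)} - \epsilon^{(i,0)}$.

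The main obstacle, and the source of the sharp $(1-\gamma)^{-3}$ exponent, is the \emph{total-variance argument}. A naive bound $\sigma^{(i,0)}(k) \le (1-\gamma)^{-2}$, propagated through the Bellman inverse $(I-\gamma P_\pi)^{-1}$ which has $\ell_\infty\!\to\!\ell_\infty$ norm $(1-\gamma)^{-1}$, gives only $(1-\gamma)^{-4}$. The fix is the standard total-variance lemma: for any stationary $\pi$, $\|(I-\gamma P_\pi)^{-1}\sqrt{\mathrm{Var}_{P}[v^\pi]}\|_\infty \lesssim (1-\gamma)^{-3/2}$. Because monotonicity forces $V_{\theta^{(i,0)}}$ to approach $v^*$, the empirical variances $\sigma^{(i,0)}$ can be compared to $\mathrm{Var}_P[v^*]$ up to lower-order terms controlled by $\|u^{(i)}\|_\infty$. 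Carefully unrolling one outer round, which amounts to $R=\tilde\Theta((1-\gamma)^{-1})$ approximate Bellman updates, gives the contraction $\|u^{(i+1)}\|_\infty \le \tfrac12 \|u^{(i)}\|_\infty + O(\epsilon)$, so $R' = \tilde\Theta(\log(\epsilon^{-1}(1-\gamma)^{-1}))$ outer rounds suffice. Summing sample costs, the outer loop contributes $K R' m = \tilde O(K/(\epsilon^2(1-\gamma)^3))$ and the inner loop $K R R' m_1 = \tilde O(K/(1-\gamma)^3 \cdot \log(\cdot))$, which together give the claimed bound; a union bound over all $\tilde O(K R R')$ concentration events completes the $(1-\delta)$ guarantee.
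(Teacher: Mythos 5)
Your proposal is correct and follows essentially the same route as the paper's proof: the monotonicity invariant preserved via downward-shifted anchor estimates and the convex-combination lifting (Lemma~\ref{lemma:mono}), Bernstein/Hoeffding concentration at the $K$ anchors with the variance-reduction decomposition (Lemma~\ref{lemma-9}), the Jensen-plus-law-of-total-variance argument bounding $(I-\gamma P^{\pi^*})^{-1}\sqrt{\sigma_{v^*}^{\pi^*}}$ by $O((1-\gamma)^{-3/2})$ (Lemmas~\ref{lemma:var}, \ref{lemma-10}, \ref{lemma:f-event}), and the same outer/inner sample accounting. The only detail left implicit is the Jensen step $\sum_k\phi_k(s,a)\sqrt{\sigma_k[v^*]}\le\sqrt{\sigma_{s,a}[v^*]}$ needed to pass from anchor-level variances to the trajectory-level total-variance bound, but this is exactly the paper's Lemma~\ref{lemma:var} and fits within your ``carefully unrolling one outer round.''
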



Theorem \ref{thm:main-thm} is proved through a series of lemmas, which we defer to the appendix. Here we sketch the key ideas.

\begin{proof}[Proof Sketch]



Let $H=\frac1{1-\gamma}$ for short. Each outer-loop iteration decreases the policy error upper bound by at least half.
Suppose $\theta^{(i, 0)}$ is the parameter when the $i$th outer iteration begins, we expect 
$
 \|V_{\theta^{(i, 0)}} - v^*\|_{\infty}\le H/2^{i} ,
$
with high probability. 
Therefore, after  $R' = \log(H/\epsilon) $ iterations, we expect
$
\|V_{\theta^{(R', 0)}} - v^*\|_{\infty}\le H/2^{R'} = O(\epsilon).
$

Now we analyze how many samples are sufficient within one outer-loop iteration. We show that the final error is mainly due to $\epsilon^{(i,0)}$, which comes from estimating the reference function $V_{\theta^{(i,0)}}$ (Line 13). This error is exemplified in the inner loop since $V_{\theta^{(i,0)}}$ is used repeatedly (line 18). 

A key step of the proof is to show that the error contributed by $\epsilon^{(i,0)}$ throughout the inner-loop iterations is small.
By using the monotonicity property, we can show that $$\epsilon^{(i,0)}(s,a)\lesssim \sqrt{\sigma_{v^*}(s,a)/m},\quad\forall~(s,a),$$ 
where $\lesssim$ denotes ``approximately less than" (ignoring non-leading terms), and $\sigma_{v^*}:\cS\times\cA\to\RR$ is an intrinsic variance function of the MDP:
\[
\sigma_{v^*}(s,a) := \var_{s'\sim P(\cdot|s,a)}\big[v^*(s')\big].
\]
By using the monotonicity property, we prove by induction:
\begin{align*}
 v^*&- v^{\pi_{\theta^{(i,R)}}}\le v^*- V_{\theta^{(i,R)}}
\lesssim
\gamma P^{\pi^*}(v^*- V_{\theta^{(i,R-1)}})
 + \epsilon^{(i,0)}_{\pi^*} \\
 &\lesssim\ldots
 \le \gamma^{R}
 (v^*- V_{\theta^{(i,0)}})
 +\sum_{i=0}^{R}\gamma^i(P^{\pi^*})^i\epsilon^{(i,0)}_{\pi^*}
\\
&\lesssim (I-\gamma P^{\pi^*})^{-1} \epsilon^{(i,0)}_{\pi^*}
\\
&\lesssim (I-\gamma P^{\pi^*})^{-1} \sqrt{\sigma_{v^*}^{\pi^*}}/\sqrt{m}, \qquad \text{pointwise w.h.p.}, 
\end{align*}
where $\sigma_{v^*}^{\pi^*}(s) = \sigma_{v^*}(s, \pi^*(s))$, $\epsilon^{(i,0)}_{\pi^*}(s) = \epsilon^{(i,0)}(s, \pi^*(s))$, and $m$ is the mini-batch size.
Now we have found a connection between the error accumulation of the algorithm and the intrinsic variance of the MDP.
By a form of conditional {law of total variance} of the Markov process (Lemma \ref{lemma:var}) and using the convex combination property (Assumption \ref{assum:lgm}), one has
$$ (I-\gamma P^{\pi^*})^{-1} \sqrt{\sigma_{v^*}^{\pi^*}}
= \wt{O}\big(\sqrt{H^3}\big)\cdot \boldsymbol{1}.$$
Therefore the inner loop accumulates error 
$\wt{O}(\sqrt{H^3/m})$, so $m=O(H^3) = O((1-\gamma)^{-3})$ number of samples is enough.

Finally, we  prove by induction that all the desired events happen with sufficiently high probability, so that the iterates improve monotonically to the optimal solution within a sequence of carefully controlled error bars. The total number of outer iterations is nearly constant, therefore the total sample size needed scales with $O((1-\gamma)^{-3})$.
\end{proof}

\begin{remark}[\bf Sample Optimality of Algorithm \ref{alg:learn}]\rm
Theorem \ref{thm:main-thm} matches the information-theoretic lower bound of Theorem \ref{theorem-lower} up to polylog factors with respect to all parameters $S,A,K,\epsilon,1-\gamma$ (note that Theorem~\ref{theorem-lower} still holds under the anchor restriction). Therefore it is a sample-optimal method for solving the feature-based MDP. No other method can outperform it by more than polylog factors. 
\end{remark}

\begin{remark}[\bf About Anchor State-Actions]\rm The proof of Theorem \ref{thm:main-thm} relies on the anchor assumption. The monotonicity property can be preserved because the anchor state-action pairs imply an implicit non-negative factorization of the transition kernel. The convex combination property of anchor state-actions is used in analyzing the error accumulation, needed by the conditional {law of total variance}. Anchor condition is commonly believed to be a key to identifying nonnegative models; see for example \cite{donoho2004does}. We believe this is the first observation that it also relates to sample-optimal reinforcement learning.  

Note that it is possible that the number of anchors is greater than the number of features $K$, then one can append new (dependent) features to make them equal. In this sense Assumption \ref{assum:lgm} {\it always} holds and the actual sample complexity depends on the number of anchors (instead of features). In addition, the anchors can be pre-computed as long as the $\phi$ feature map is known.

\end{remark}

\begin{remark}[\bf Significance of $(1-\gamma)^{-4}$ Improvement] \rm
Let us compare the sample complexities of Algorithms \ref{alg:learn agm}, \ref{alg:learn}.  They differ by a multiplicative gap $(1-\gamma)^{-4}$. Recall that $\gamma\in(0,1)$ is the discount factor. One can view 
$({1-\gamma})^{-1} = 1 +\gamma+\gamma^2+\cdots$ as an approximate horizon.
If $\gamma=0.99$, the MDP essentially has $100$ time steps, and
$$(1-\gamma)^{-4} = 10^8,$$
i.e., Algorithm  \ref{alg:learn} is $10^8$ times faster. It only needs a tiny portion ($1/10^8$) of the samples as needed by the basic algorithm.
We see that clever algorithmic usage of monotonicity and variance structures of the MDP saves big.
\end{remark}

\section{Related Literatures}\label{related-work}
There is a body of works studying the sample complexity of tabular DMDP (i.e., the finite-state finite-action case without structural knowledge). 
Sample-based algorithms for learning value and policy functions have been studied in \citet{kearns1999finite, kakade2003sample, singh1994upper, azar2011speedy, azar2013minimax, sidford2018variance, sidford2018near} and many others.
Among these papers, \citet{azar2013minimax} obtains the first tight sample bound for finding an $\epsilon$-optimal value function,
\citet{sidford2018near} obtains the first tight sample bound for finding an $\epsilon$-optimal policy;
both complexities are of the form  $\wt{O}[|\cS||\cA|(1-\gamma)^{-3}]$.
Lower bounds have been shown in \citet{azar2011reinforcement, even2006action} and \citet{ azar2013minimax}. 
\citet{azar2013minimax} gives the first tight lower bound $\Omega[|\cS||\cA|(1-\gamma)^{-3}]$.


Our result is relevant to the large body of works using linear models and basis functions to approximate value and Q functions. 
For instance,
\citet{tsitsiklis1997analysis, nedic2003least,lagoudakis2003least,melo2008analysis, parr2008analysis, sutton2009convergent, lazaric2012finite, tagorti2015rate} and \citet{maei2010toward} studies  both policy evaluation and optimization by assuming values are from a linear space. 
\citet{tsitsiklis1997analysis} studied the convergence of the temporal-difference learning algorithm for approximating the value function for a fixed policy.
\citet{nedic2003least} studies the {policy evaluation} problem using least square. 
\citet{parr2008analysis} studies the relationships of using linear functions to represent values  and to represent transition models. 
\citet{melo2008analysis} studies the almost sure convergence of $Q$-learning-like methods using linear function approximation.
\citet{sutton2009convergent} shows off-policy temporal-difference learning is convergent with linear function approximation. These earlier works primarily focused on convergence using linear function approximation, without analyzing the sample complexity.

Fitted value iteration (VI) applies to more general function approximators of the value function \cite{ms08, asm08, fsm10, asm08a}, where $v$ is approximated within a low-dimensional function space $\cF$.
They have shown that the error of the fitted-VI is affected by the Bellman error of the space $\cF$.
Their result applies to a general set of functional spaces, where the statistical error depends on a polynomial of $1/\epsilon, 1/(1-\gamma)$ and the intrinsic dimension of the functional space.
It appears that their result works for the $\ell_p$ norm of the policy error, which is proportional to $\epsilon^{-\Theta(p)}$ with high probability. Their result does not apply to the $\ell_{\infty}$ policy error which is the focus of the current paper.

More recently,
\citet{lazaric2012finite, tagorti2015rate} analyzes the sample complexity of temporal difference least square for evaluating a fixed policy.
Recently, a work by \citet{jiang2017contextual} studies the case when a form of Bellman error' is decomposable and has a small rank. 
They show that the number of trajectories needed depends on the Bellman rank rather than the number of states.
\citet{chen2018scalable} proposes a primal-dual method for policy learning that uses linear models and state-action features for both the value and state-action distribution.
To our best knowledge, there is no existing result that solves the linear-model MDP with provable-optimal sample complexity.

\section{Remarks}

The paper studies the information-theoretic sample complexity for solving MDP with feature-based linear transition model. It provides the first sharp sample complexity upper and lower bounds for learning the policy using a generative model. It also provides a sample-optimal parametric Q-learning method that involves computing confidence bounds, variance reduction and monotonic improvement. 
We hope that establishing sharp results for the basic linear model would shed lights on more general structured models and motivate faster solutions.



\section*{Acknowledgment}
We thank Zhuoran Yang for pointing out a flaw in the initial proof of Proposition~\ref{prop-equiv}. We thank the anonymous reviewers for the helpful comments.

\bibliographystyle{plainnat}
\bibliography{ref, reference}


\appendix
\onecolumn

\section{Proofs of Propositions 1,2}

\begin{proof}[Proof of Proposition~\ref{prop:relatelm}]
Let $v^{\pi}$ be the value function of $\pi$.
Since $M \in \mathcal{M}^{trans}(\cS,\cA,\gamma, \phi)$, we have $P(s'|s,a) = \sum_{k\in [K]}\psi_{k}(s') \phi_k(s,a)$ for some $\psi_k$'s.  We have
\begin{align*}
Q^{\pi}(s,a) &= r(s,a) + \gamma \sum_{s'\in \cS}P(s'|s,a) v^{\pi}(s')
=r(s,a) + \gamma \sum_{k\in [K]} \phi_k(s,a) \sum_{s'\in \cS}\psi_{k}(s') v^{\pi}(s')\\
&=r(s,a) + \gamma \sum_{k\in [K]}\phi_k(s,a)w^\pi(k)
\end{align*}
where vector $w^{\pi}\in \RR^{K}$ is specified by
\[
\forall k\in [K]: 
w^\pi(k) = \sum_{s'\in\cS} \psi_k(s') v^{\pi}(s').
\]
Therefore $Q^{\pi} \in \hbox{Span}(r,\phi).$
\end{proof}

\begin{proof}[Proof of Proposition~\ref{prop-equiv}]
``If'' direction: 
Since $M\in \cM^{trans}$, we have from the proof of Proposition~\ref{prop:relatelm} that for any $Q\in \cF$, $\cT Q\in \cF$.

``Only if'' direction: If $d(\cT \cF, \cF) = 0$, then 
for any $Q\in \cF$  
We have
\[
\cT Q = r + \gamma P V(Q) \in \cF.
\]
We can then pick a maximum-sized set $\{Q_1, Q_2, \ldots Q_k
\}\subset \cF$ such that 
$V(Q_1), V(Q_2), \ldots V(Q_k)$ are linear independent.
Note that $k\le K$.
Denote $A = [V(Q_1), V(Q_2), \ldots V(Q_k)]$, $B=[\cT Q_1, \cT Q_2, \ldots, \cT Q_k]$ and $R=[r,r,r\ldots, r]$ (with $k$ columns).
We then have
\[
B =  R + \gamma P A. 
\]
Hence we have
\[
P = \gamma^{-1}(B-R)A^\top (AA^\top)^{-1}. 
\]
Since each column of $B-R$ is a vector in $\cF$, we conclude that each column of $P$ is a vector in $\cF$.
\end{proof}


\section{Proof of Theorem \ref{theorem-lower}}
\begin{proof}[Proof of Theorem~\ref{theorem-lower}]
Let $\cM'$ be the class of all tabular DMDPs with state space $\cS'$, action space $\cA'$, and discount factor $\gamma$.
Let $\cK'$ be an algorithm for such a class of DMDPs with a generative model.
Let \[
N=O\bigg(\frac{|\cS'||\cA'|}{(1-\gamma)^3\cdot \epsilon^2\cdot \log \epsilon^{-1}}\bigg).
\]	
For each $M'\in \cM'$, let $\pi^{\cK', M', N}$ be the policy  returned by $\cK'$ with querying at most $N$ samples from the generative model.
The lower bound in Theorem~B.3 in \citet{sidford2018near}(which is derived from Theorem~3 in \citet{azar2013minimax}) states that
\begin{align*}
\inf_{\cK'}\sup_{M'\in \cM'} \mathbb{P}\bigg[ \sup_{s\in\cS} (v^{*, M'}(s)-v^{\pi^{\cK', M',N}}(s) )&\ge \epsilon  \bigg] \geq 1/3,
\end{align*}
where $v^{*, M'}$ is the optimal value function of $M'$.
Suppose, without loss of generality, $K = |\cS'||\cA'| + 1$.
We prove Theorem~\ref{theorem-lower} by showing that every DMDP instance $M'\in \cM'$ can be converted to an instance $M\in \cM^{trans}_K(\cS,\cA, \gamma)$ such that any algorithm  $\cK$ for $\cM^{trans}_K(\cS,\cA, \gamma)$ can be used to solve $M'$.


For a DMDP instance $M' = (\cS', \cA', P', r', \gamma)\in \cM'$, we construct a corresponding DMDP instance $M=(\cS, \cA, P, r, \gamma)\in\cM^{trans}_K(\cS,\cA, \gamma)$ with a feature representation $\phi$.
We pick $\cS$ and $\cS$ to be supersets of $\cS$ 
and $\cA'$  respectively, so that the transition distributions and rewards remain unchanged on $\cS'\times \cA'$, i.e., $P(\cdot\mid s,a)=P'(\cdot\mid s,a)$ and $r( s,a)=r'( s,a)$  for $s\in\cS',a\in\cA'$.
From $(s,a)\in (\cS\times \cA)/(\cS'\times \cA')$, the process transitions to an absorbing state $s^0\in \cS/\cS'$ with probability 1 and stays there with reward 0.

Now we show that $M$ admits a feature representation $\phi: \cS\times\cA \to\RR^K$ as follows. Say $(s,a)$ is the $k$-th element in $\cS'\times\cA$, we let $\phi(s,a) = \bf{1}_k$, which is the unit vector whose $k$th entry equals one. For $(s,a)\notin \cS'\times\cA'$, we let $\phi(s,a) = \bf{1}_K$.
Then we can verify that $P(s'\mid s,a) = \sum_{k\in[K]} \phi_k(s,a) \psi_k(s')$ for some $\psi_k$'s. Thus we have constructed an MDP instance $M'\in\cM^{trans}_K(\cS,\cA, \gamma)$ with feature representation $\phi$ . 


Suppose that $\cK$ is an algorithm that applies to $M$ using $N$ samples. Based on the reduction, we immediately obtained an algorithm $\cK'$ that applies to $M'$ using $N$ samples and the feature map $\phi$: $\cK'$ works by applying $\cK$ to $M$ and outputs the restricted policy on $\cS'\times\cA'$.
It can be easily verified that if $\pi$ is an $\epsilon$-optimal policy for $M$ then the reduction gives an $\epsilon$-optimal policy for $M'$. 
By virtue of the reduction, one gets
\begin{align*}
\inf_{\cK}\sup_{M\in \cM^{trans}_K(\cS,\cA, \gamma)} \mathbb{P}\bigg( \sup_{s\in\cS} (v^*(s)-v^{\pi^{\cK, M,N}}(s) )\ge \epsilon  \bigg) 
&\ge 
\inf_{\cK'}\sup_{M'\in \cM'} \mathbb{P}\bigg( \sup_{s\in\cS} (v^{*, M'}(s)-v^{\pi^{\cK', M',N}}(s) )\ge \epsilon  \bigg)\\
&\geq 1/3,
\end{align*}
This completes the proof.
\end{proof}

\section{Proof of Theorem \ref{thm:agm-result}.}

\begin{proof}
	Recall that ${P}_{\cK}$ is a submatrix of ${P}$ formed by the rows indexed by $\cK$.
	We denote $\wt{P}_{\cK}$ in the same manner for $\wt{P}$.
	Recall that $\|P - \wt{P}\|_{1,\infty}\le \xi$. 
	Let $\wh{P}_{\cK}^{(t)}$ be the matrix of empirical transition probabilities based on $m:=N/(KR)$ sample transitions per $(s,a)\in\cK$ generated at iteration $k$. It can be viewed as an estimate of ${P}_{\cK}$ at iteration $t$.
	Since $\wt{P}$ admits a context representation, 
	it can be written as 
	\[
	\wt{P} = \Phi\Psi\quad\text{where}\quad \Psi = \Phi_{\cK}^{-1}\wt{P}_{\cK}.
	\]

	Let $\wh{\Psi}^{(t)}=\Phi_{\cK}^{-1}\wh{P}_{\cK}^{(t)}$ be the estimate of $\Psi$ at iteration $t$.
	We can view $\Phi\wh{\Psi}^{(t)}$ as an estimate of $P$.
	
	We will show that each iteration of the algorithm is an approximate value iteration.
	We first define the approximate Bellman operator, $\wh{\cT}$ as, $\forall v\in \RR^{\cS}:\quad$ 
	\begin{align*}
	[\wh{\cT}^{(t)} v](s) 
= \max_a\Big[r(s,a) + \gamma \phi(s,a)^\top \Phi_{\cK}^{-1}\wh{P}_{\cK}^{(t)} v\Big].
	\end{align*}
	Notice that, by definition of the algorithm,
	\[
	V_{w^{(t)}} \gets \wh{\cT}^{(t)} \Pi_{[0,H]}[V_{w^{(t-1)}}],
	\]
	where $w^{(0)} = 0\in \RR^{K}$ and $w^{(t)}$ is the $w$ at the end of the $t$-th iteration of the algorithm and $H=(1-\gamma)^{-1}$ and $\Pi_{[0,H]}(\cdot)$ denotes entrywise projection to $[0,H]$.
	For the rest of the proof, we denote
	\[
	\wh{V}_{w^{(t-1)}} = \Pi_{[0,H]}[V_{w^{(t-1)}}].
	\]
	
	We now show the approximation quality of $\wh{\cT}$,
	i.e., estimate $\| \wh{\cT}^{(t)} \wh{V}_{w^{(t-1)}} -  {\cT} \wh{V}_{w^{(t-1)}}\|_{\infty}$, where $\cT$ is the exact Bellman operator.
	Notice that
	\begin{align*}
	\forall s:\quad |[\wh{\cT}^{(t)}  \wh{V}_{w^{(t-1)}}](s) - 
	[\cT  \wh{V}_{w^{(t-1)}}](s)|
	&\le \gamma\max_{a}\big| \phi(s,a)^\top \Phi_{\cK}^{-1}  \wh{P}_{\cK}^{(t)}  \wh{V}_{w^{(t-1)}} - 
	P(\cdot|s,a)^\top  \wh{V}_{w^{(t-1)}}\big|.
	\end{align*}
	It remains to show the right hand side of the above inequality is small.

	Denote $\cF_t$ to be the filtration defined by  the samples up to iteration $t$.
		Then, by the Hoeffding inequality and the fact that the samples at iteration $t$ are independent with that from iteration $t-1$, we have
	\begin{align*}
	\Pr\bigg[\|\wh{P}_{\cK}^{(t)} \wh{V}_{w^{(t-1)}} &- P_{\cK} \wh{V}_{w^{(t-1)}}\|_{\infty}\le \epsilon_1
	\bigg|\cF_{t-1}\bigg]\ge 1-\delta/R
	\end{align*}
	where we denote 
	\[\epsilon_1 = cH\cdot\sqrt{\frac{\log(KR\delta^{-1})}{m}}
	\]
	for 
	some generic constant $c$.
	Next,	
	let $\cE_t$ be the event that,
	\begin{align*}
	\| \wh{P}_{\cK}^{(t)} \wh{V}_{w^{(t-1)}} -  P_{\cK}  \wh{V}_{w^{(t-1)}}\|_{\infty} 
	 \le \epsilon_1.
	\end{align*}
	We thus have $\Pr[\cE_{t}|\cF_{t-1}] \ge 1-\delta/R$ and $\Pr[\cE_t|\cE_{1}, \cE_2, \ldots\cE_{t-1}] \ge 1-\delta/R$ since $\cE_{1}, \cE_2, \ldots\cE_{t-1}$ are adapted to $\cF_{t-1}$.
	This lead to 
	\[
	\Pr[\cE_1\cap \cE_2 \cap \ldots \cap \cE_R]=
	\Pr[\cE_1]\Pr[ \cE_2 |\cE_1] \ldots \ge 1-\delta.
	\]
	Now we consider event $\cE := \cE_1\cap \cE_2 \cap \ldots \cap \cE_R$, on which we have, for all $t\in [R]$,
	\begin{align*}
	|\phi(s,a)^\top \Phi_{\cK}^{-1} & \wh{P}_{\cK}^{(t)} \wh{V}_{w^{(t-1)}} -\phi(s,a)^\top \Phi_{\cK}^{-1} P_{\cK}  \wh{V}_{w^{(t-1)}}|
	\le \|\phi(s,a)^\top \Phi_{\cK}^{-1}\|_1\cdot\epsilon_1\le L\epsilon_1.
	\end{align*}
	Note that, $\|{P}_{\cK}-\wt{P}_{\cK}\|_{1,\infty}\le \xi$, we thus have
	\begin{align*}
	|\phi(s,a)^\top &\Phi_{\cK}^{-1}  \wh{P}_{\cK}^{(t)} \wh{V}_{w^{(t-1)}} -
	\phi(s,a)^\top \Phi_{\cK}^{-1} \wt{P}_{\cK}  \wh{V}_{w^{(t-1)}}|
	\le L\epsilon_1 + |\phi(s,a)^\top\Phi_{\cK}^{-1}(P_{\cK} - \wt{P}_{\cK})\wh{V}_{w^{(t-1)}}|
	\le L\epsilon_1 + LH\xi,
	\end{align*}
	Further using 
	\[|(\phi(s,a)^\top\Phi_{\cK}^{-1}  \wt{P}_{\cK}^{(t)} - P(\cdot|s,a)^\top) \wh{V}_{w^{(t-1)}}|\le H\xi,
	\]
	we thus have
	\begin{align*}
	|\phi(s,a)^\top \Phi_{\cK}^{-1}  \wh{P}_{\cK}^{(t)} \wh{V}_{w^{(t-1)}} -
	{P}(\cdot|s,a)^\top  \wh{V}_{w^{(t-1)}}|
	&\le |\phi(s,a)^\top( \Phi_{\cK}^{-1}  \wh{P}_{\cK}^{(t)} - \Phi_{\cK}^{-1}  \wt{P}_{\cK}^{(t)} +  \Phi_{\cK}^{-1}  \wt{P}_{\cK}^{(t)}) \wh{V}_{w^{(t-1)}}\\
	&\qquad-
	{P}(\cdot|s,a)^\top  \wh{V}_{w^{(t-1)}}|\\
	&\le L\epsilon_1 + LH\xi + H\xi.
	\end{align*}
	Further notice that $\Pi_{[0, H]}$ can only makes error smaller.
	Therefore, we have shown that the $\wh{V}_{w^{(t)}}$s follow an approximate value iteration with error $\gamma [L\epsilon_1 + (L+1)H\xi]$ with probability at least $1-\delta$.
	Because of the contraction of the operator $\cT$, we have, after $R$ iterations, 
	\begin{align*}
	\|\wh{V}_{w^{(R-1)}}- v^*\|_{\infty}  &\le \gamma^{R-1}H 
	+ \gamma R [L\epsilon_1 + (L+1)H\xi]\le \gamma R [2L\epsilon_1 + (L+1)H\xi]
	\end{align*}
	for appropriately chosen $R=\Theta(\log (NH)/(1-\gamma))$.
	Since $Q_{w^{(R)}}(s,a) = r(s,a) + \gamma \phi(s,a)^\top \Phi_{\cK}^{-1}\wh{P}_{\cK}^{(R)}\wh{V}_{w^{(R-1)}}$, we have,
	\[
	\|Q_{w^{(R)}} - Q^*\|_{\infty} \le 2\gamma R [2L\epsilon_1 + (L+1)H\xi]
	\]
	happens with probability at least $1-\delta$.
	It follows that (see, e.g., Proposition~{2.1.4} of \cite{bertsekas2005dynamic}),
	\[
	\|v^{\pi_{w^{(R)}}} - v^*\|_{\infty}\le 2\gamma RH  [2L\epsilon_1 + (L+1)H\xi],
	\]
	with probability at least $1-\delta$.
	Plugging the values of $H, \epsilon_1$ and $m$, we have
	\[
	\|v^{\pi_{w^{(R)}}} - v^*\|_{\infty} 
	\le C\gamma\cdot \frac{\log (NH)}{1-\gamma} \cdot \frac{1}{1-\gamma} \cdot L \cdot  \sqrt{\frac{K\log(KR\delta^{-1})}{(1-\gamma)^2\cdot N}\cdot 
	 \frac{\log (NH)}{1-\gamma}}
 	+C\gamma\cdot \frac{\log (NH)}{1-\gamma} \cdot \frac{L}{(1-\gamma)^2} \cdot \xi
	\]
	for some generic constant $C > 0$.
	This completes the proof.
%

\end{proof}

\section{Proof of Theorem \ref{thm:main-thm}}
According to the discussions following Assumption \ref{assum:lgm}, we assume without loss of generality:
\begin{itemize}
\item For each anchor $(s_k,a_k)\in \mathcal{K}$, $\phi(s_k,a_k)$ is a  vector with $\ell_1$-norm $1$.
\end{itemize}
Then Assumption 2 further implies
\begin{itemize}
\item $\phi(s,a)$ is a vector of probabilities for all $(s,a)$.
\item For each $(s,a)$, $P(\cdot | s,a) =\sum_k \phi_k(s,a) P(\cdot\mid s_k,a_k)$. 
\end{itemize}

\subsection{Notations}

\noindent\textbf{$\cT$-operator}
For any value function $V:\cS\rightarrow \RR$ and policy $\pi:\cS\rightarrow \cA$,
we denote the Bellman operators as  
\[
\cT V[s] =
\max_{a\in \cA} \big[r(s,a) + \gamma P(\cdot|s,a)^\top V\big]
\quad\text{and}\quad
\cT_\pi V[s] =
r(s,\pi(s)) + \gamma P(\cdot|s,\pi(s))^\top V
\]
The key properties, e.g. monotonicity and contraction, of the $\cT$-operator can be found in \citet{puterman2014markov}. 
For completeness, we state them here.
\begin{fact}[Bellman Operator]
For any value function $V, V':\cS\rightarrow \RR$, if $V\le V'$ entry-wisely, we then have,
\begin{align*}
\cT V\le \cT V'&\quad\text{and}\quad
\cT_{\pi} V\le \cT_{\pi} V', \\ 
\|\cT V-v^*\|_{\infty}\le \gamma \| V - v^*\|_{\infty}&\quad\text{and}\quad \|\cT_{\pi} V-v^{\pi}\|_{\infty}\le \gamma \| V - v^{\pi}\|_{\infty},\\
\lim_{t\rightarrow\infty}\cT^t V = v^*&\quad\text{and}\quad
\lim_{t\rightarrow\infty}\cT_{\pi}^t V = v^{\pi}.
\end{align*}
\end{fact}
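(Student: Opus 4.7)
The plan is to verify the three bulleted properties in order—monotonicity, contraction, and convergence to the fixed point—treating the policy‐operator $\cT_\pi$ as a special case of $\cT$ where the ``max over $a$'' is replaced by evaluation at $a=\pi(s)$. Throughout I will use the single structural fact that for every $(s,a)$, the transition vector $P(\cdot\mid s,a)$ has nonnegative entries summing to $1$, so the map $u\mapsto P(\cdot\mid s,a)^\top u$ is (i) monotone in $u$ and (ii) nonexpansive in $\|\cdot\|_\infty$.

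First I would handle monotonicity. Fix $V\le V'$ entrywise. For each $(s,a)$, nonnegativity of $P(\cdot\mid s,a)$ gives $P(\cdot\mid s,a)^\top V\le P(\cdot\mid s,a)^\top V'$; multiplying by $\gamma\ge 0$ and adding $r(s,a)$ preserves the inequality, so
\begin{equation*}
r(s,a)+\gamma P(\cdot\mid s,a)^\top V\ \le\ r(s,a)+\gamma P(\cdot\mid s,a)^\top V'.
\end{equation*}
Taking the maximum over $a$ on both sides (using that $\max$ is monotone in each argument) yields $[\cT V](s)\le [\cT V'](s)$ at every $s$. Specializing $a=\pi(s)$ instead of taking the max yields the corresponding statement for $\cT_\pi$.

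Next I would prove contraction. I will first show $\|\cT V-\cT V'\|_\infty\le\gamma\|V-V'\|_\infty$ for arbitrary $V,V'$, from which the statement about $v^*$ follows by substituting $V'=v^*$ and using the Bellman equation $\cT v^*=v^*$. Fix $s$ and, without loss of generality, assume $[\cT V](s)\ge[\cT V'](s)$; let $a^\star\in\arg\max_a[r(s,a)+\gamma P(\cdot\mid s,a)^\top V]$. Then
\begin{align*}
0\ \le\ [\cT V](s)-[\cT V'](s)
&\ \le\ \gamma\bigl(P(\cdot\mid s,a^\star)^\top V-P(\cdot\mid s,a^\star)^\top V'\bigr)\\
&\ \le\ \gamma\sum_{s'}P(s'\mid s,a^\star)\,|V(s')-V'(s')|\ \le\ \gamma\|V-V'\|_\infty,
\end{align*}
where the first inequality bounds $[\cT V'](s)$ from below by plugging in the suboptimal action $a^\star$ and the last uses that $P(\cdot\mid s,a^\star)$ is a probability vector. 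The symmetric case and the argument for $\cT_\pi$ (with $a^\star$ replaced by $\pi(s)$, which actually makes the first step an equality) are identical.

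Finally, for convergence, contraction on the complete metric space $(\RR^{\cS},\|\cdot\|_\infty)$ together with the existence of fixed points $v^*$ and $v^{\pi}$ (standard consequences of the Banach fixed-point theorem applied to the $\gamma$-contraction, or directly from $v^*=\lim_{T}\EE[\sum_{t=0}^{T}\gamma^t r^t]$) yields $\|\cT^t V-v^*\|_\infty\le\gamma^t\|V-v^*\|_\infty\to 0$ and likewise $\|\cT_\pi^t V-v^{\pi}\|_\infty\le\gamma^t\|V-v^\pi\|_\infty\to 0$. There is no real obstacle here; the only subtlety is the one-sided bound step in the contraction argument (choosing the maximizer $a^\star$ for the larger side and plugging it into the smaller side), which is the standard trick to avoid having to interchange max with absolute value. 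I would therefore spell that step out carefully and let the rest follow mechanically.
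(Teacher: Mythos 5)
Your proof is correct and is exactly the standard argument: monotonicity from nonnegativity of the transition kernel, contraction via the one-sided maximizer trick, and convergence from the $\gamma^t$ geometric decay together with the fixed-point characterizations of $v^*$ and $v^{\pi}$. The paper states this result as a fact without proof, deferring to \citet{puterman2014markov}, and your write-up matches that textbook treatment, including the one genuinely delicate step (bounding $[\cT V](s)-[\cT V'](s)$ by plugging the maximizer $a^\star$ for the larger side into the smaller side rather than exchanging $\max$ with absolute value), which you correctly identify and handle.
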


\paragraph{$Q$-function} We let, for any $(s,a)$,
\begin{align*}
Q_{\theta^{(i, j)}}(s, a) &= 
r(s, a) + \gamma \phi(s,a)^{\top} \wb{w}^{(i,j)},
\\ 
\wb{Q}_{\theta^{(i, j)}}(s, a) &= r(s, a) + \gamma P(\cdot|s,a)^\top V_{\theta^{(i,j - 1)}}(\cdot).
\end{align*}
\paragraph{Variance of value function}
For $(s,a)$, we denote the variance of a function (or a vector) $
V:\cS\rightarrow \RR$ as,
\[
\sigma_{s,a}[V]
:= \sum_{s'}P(s'|s,a)V^2(s')
- \Big(\sum_{s'}P(s'|s,a)V(s')\Big)^2,
\]
we also denote $\sigma_{k} (\cdot)= \sigma_{s_k,a_k}(\cdot)$ for $(s_k, a_k)\in \cK$.
\paragraph{$\cE$-event}
In Algorithm~\ref{alg:learn},
let $\cE^{(i,0)}$ be the event that 
\begin{align*}
\forall k\in [K]: |w^{(i,0)}(k) -  P(\cdot|s_k, a_k)^\top V_{\theta^{(i,0)}}|&\le
\epsilon^{(i,0)}(k)
 \le C\bigg[\sqrt{\frac{\log(R'RK\delta^{-1})\sigma_k[V_{\theta^{(i, 0)}}]}{m}} + 
 \frac{\log(R'RK\delta^{-1})}{(1-\gamma)m^{3/4}}
\bigg]
\end{align*}
for some generic constant $C>0$.
Let $\cE^{(i,j)}$ be the event on which
\begin{align*}
\forall k\in [K]:\quad 
&|w^{(i,j)}(k) - w^{(i,0)}(k) -P(\cdot|s_k,a_k)^\top (V_{\theta^{(i, j-1)}} - V_{\theta^{(i, 0)}})|\le C(1-\gamma)^{-1}2^{-i}\sqrt{\log(R'RK\delta^{-1})/m_1},
\end{align*}
where $R', R, m, m_1$ are parameters defined in Algorithm~\ref{alg:learn}.

\paragraph{$\cG$-event}
Let $\cG^{(i)}$ be the event such that
$$0\le V_{\theta^{(i,0)}}(s)\le \cT_{\pi_{\theta^{(i, 0)}}} V_{\theta^{(i, 0)}}[s]\le v^*(s)
,\qquad
v^*(s)-V_{\theta^{(i,0)}}(s)\le c2^{-i}/(1-\gamma),\qquad\forall s\in\cS,$$ for some sufficiently small constant $c$.

\subsection{Some Properties}
Firstly we notice that the parameterized functions $Q_{\theta}, V_{\theta}$ (eq. \eqref{eqn:vpidecoder1}) increase pointwisely (as index $(i,j)$ increases). 
\begin{lemma}[Monotonicity of the Parametrized $V$]
	\label{lemma:mon}
For every $(i, j), (i',j') \in [R']\times[R]$, and $s\in \cS$, if $(i, j)\le (i',j')$ (in lexical order), we have
	\[
	V_{\theta^{(i, j)}}(s)  \le V_{\theta^{(i', j')}}(s).
	\]
\end{lemma}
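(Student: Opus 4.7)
The plan is to reduce the claimed pointwise monotonicity to a set-inclusion statement about the parameter collections $\theta^{(i,j)}$, and then invoke the fact that $V_\theta$ is defined as a maximum indexed by the elements of $\theta$. Concretely, I would first recall from the parametrization \eqref{eqn:vpidecoder1} that
\[
V_{\theta}(s) \;=\; \max_{w\in\theta}\,\max_{a\in\cA}\bigl(r(s,a) + \gamma\,\phi(s,a)^\top w\bigr),
\]
so $V_\theta(s)$ is monotonically non-decreasing in $\theta$ with respect to set inclusion: if $\theta \subseteq \theta'$, then $V_\theta(s) \le V_{\theta'}(s)$ for every state $s\in\cS$.

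Next I would verify, directly from Algorithm~\ref{alg:learn}, that the family $\{\theta^{(i,j)}\}$ is nested in lexical order. The two update rules that affect $\theta$ are Line~21, which sets $\theta^{(i,j)}\gets \theta^{(i,j-1)}\cup\{\wb{w}^{(i,j)}\}$, and Line~23, which sets $\theta^{(i+1,0)}\gets \theta^{(i,R)}$. Both operations preserve (and, in the case of Line~21, possibly enlarge) the current parameter set. Hence for any two index pairs with $(i,j)\le(i',j')$ in lexical order, iterating these rules gives $\theta^{(i,j)}\subseteq \theta^{(i',j')}$.

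Combining the two observations yields $V_{\theta^{(i,j)}}(s) \le V_{\theta^{(i',j')}}(s)$ pointwise, which is the desired conclusion. I do not expect a real obstacle here: the lemma is essentially a bookkeeping statement about how the algorithm accumulates parameters, and the monotonicity of a maximum under set enlargement is immediate. The only mild subtlety is to make sure that the base case $(i,0)\to(i,1)$ and the transition $(i,R)\to(i+1,0)$ are both covered; both are handled by Lines~21 and~23, respectively, so the induction on $(i,j)$ in lexical order goes through without additional work.
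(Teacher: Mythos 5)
Your proposal is correct and matches the argument the paper implicitly relies on: the paper states this lemma without a written proof, treating it as immediate from the fact that $V_\theta$ is a maximum over the parameter set and that Lines~21 and~23 of Algorithm~\ref{alg:learn} only ever enlarge (or copy) that set. Your reduction to set inclusion plus monotonicity of the max is exactly the intended bookkeeping, so there is nothing to correct.
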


We note the triangle inequality of variance.
\begin{lemma}
	For any $V_1, V_2:\cS\rightarrow \RR$, we have
	$\sqrt{\sigma_{k}[V_1+V_2]}\le \sqrt{\sigma_{k}[V_1]}+\sqrt{\sigma_k[V_2]}$ for all $k\in [K]$.
\end{lemma}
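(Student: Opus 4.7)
The plan is to recognize that $\sigma_k[\cdot]$ is exactly the variance functional under the probability distribution $P(\cdot\mid s_k,a_k)$, so the statement is precisely the well-known fact that standard deviation is a seminorm (it satisfies the triangle inequality). I will reduce the proof to Cauchy--Schwarz applied to the covariance term in the variance expansion.

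Concretely, fix $k\in[K]$ and write $\mu_k[f]\eqdef \sum_{s'} P(s'\mid s_k,a_k) f(s')$ for any $f:\cS\to\RR$, so that $\sigma_k[V] = \mu_k[V^2] - (\mu_k[V])^2$. Expanding $\sigma_k[V_1+V_2]$ using linearity of $\mu_k$ yields
\begin{equation*}
\sigma_k[V_1+V_2] = \sigma_k[V_1] + \sigma_k[V_2] + 2\bigl(\mu_k[V_1 V_2] - \mu_k[V_1]\mu_k[V_2]\bigr).
\end{equation*}
The cross term is the covariance of $V_1,V_2$ under $P(\cdot\mid s_k,a_k)$. Applying the Cauchy--Schwarz inequality to the centered functions $V_i - \mu_k[V_i]$ gives
\begin{equation*}
\bigl|\mu_k[V_1 V_2] - \mu_k[V_1]\mu_k[V_2]\bigr| \;\le\; \sqrt{\sigma_k[V_1]\,\sigma_k[V_2]}.
\end{equation*}
Substituting this bound back into the expansion yields $\sigma_k[V_1+V_2]\le \bigl(\sqrt{\sigma_k[V_1]}+\sqrt{\sigma_k[V_2]}\bigr)^2$, and taking square roots completes the proof.

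There is no real obstacle here; the only thing to be careful about is ensuring the Cauchy--Schwarz step is applied to the centered variables, so the inequality on the covariance is correct in both signs. Since $P(\cdot\mid s_k,a_k)$ is a legitimate probability distribution on $\cS$, the machinery of expectation, variance, and Cauchy--Schwarz over a discrete probability space applies verbatim, and the lemma follows immediately.
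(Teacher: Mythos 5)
Your proof is correct, and there is nothing to compare it against: the paper states this lemma without proof, treating it as the standard fact that the standard deviation under $P(\cdot\mid s_k,a_k)$ satisfies the triangle inequality. Your argument—expanding $\sigma_k[V_1+V_2]$ and bounding the covariance cross term via Cauchy--Schwarz applied to the centered functions—is the canonical proof, equivalent to observing that $\sqrt{\sigma_k[V]}$ is the $L^2(P(\cdot\mid s_k,a_k))$ seminorm of $V-\mu_k[V]$, centering is linear, and invoking Minkowski's inequality.
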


The next is a key lemma showing a property of the convex combination of the standard deviations, which relies on the anchor condition.
\begin{lemma}
	\label{lemma:var}For any $V:\cS\rightarrow\RR$ and $s,a\in \cS\times\cA$:
	\[
	 \sum_{k\in [K]}\phi_k(s,a)\sqrt{\sigma_{k}[V]}\le \sqrt{\sigma_{s,a}(V)}.
	\]
\end{lemma}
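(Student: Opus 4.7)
The plan is to exploit the anchor condition, which (as noted just before the lemma) implies the mixture representation
\[
P(\cdot\mid s,a) \;=\; \sum_{k\in[K]} \phi_k(s,a)\, P(\cdot\mid s_k,a_k),
\]
with the weights $\phi_k(s,a)\ge 0$ summing to $1$. In other words, sampling $s'\sim P(\cdot\mid s,a)$ is the same as first drawing an anchor index $k$ from the categorical distribution $\phi(s,a)$ and then drawing $s'\sim P(\cdot\mid s_k,a_k)$. Once the problem is framed this way, the inequality becomes a two-line consequence of the law of total variance plus Jensen's/Cauchy--Schwarz.

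Concretely, write $\mu_k := P(\cdot\mid s_k,a_k)^\top V$ for the conditional mean of $V$ under the anchor $k$. The law of total variance applied to the mixture $P(\cdot\mid s,a)$ gives
\[
\sigma_{s,a}[V] \;=\; \sum_{k\in[K]}\phi_k(s,a)\,\sigma_k[V] \;+\; \sum_{k\in[K]}\phi_k(s,a)\bigl(\mu_k - \bar\mu\bigr)^2,
\]
where $\bar\mu = \sum_k \phi_k(s,a)\mu_k$. The second (between-anchor) term is non-negative, so
\[
\sigma_{s,a}[V] \;\ge\; \sum_{k\in[K]}\phi_k(s,a)\,\sigma_k[V].
\]

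Next I would apply Cauchy--Schwarz to the vectors $\bigl(\sqrt{\phi_k(s,a)}\bigr)_k$ and $\bigl(\sqrt{\phi_k(s,a)\,\sigma_k[V]}\bigr)_k$, using $\sum_k \phi_k(s,a)=1$:
\[
\Bigl(\sum_{k\in[K]}\phi_k(s,a)\sqrt{\sigma_k[V]}\Bigr)^{\!2}
\;\le\;\Bigl(\sum_{k\in[K]}\phi_k(s,a)\Bigr)\Bigl(\sum_{k\in[K]}\phi_k(s,a)\,\sigma_k[V]\Bigr)
\;=\;\sum_{k\in[K]}\phi_k(s,a)\,\sigma_k[V].
\]
Chaining the two displayed inequalities and taking square roots yields the claim.

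There is no real obstacle; the only subtlety is to make sure the mixture identity $P(\cdot\mid s,a)=\sum_k\phi_k(s,a)P(\cdot\mid s_k,a_k)$ is justified, which follows from Assumption~\ref{assum:lgm} together with the normalization conventions stated at the start of the appendix (each $\phi(s_k,a_k)$ is a unit feature vector, and the $\phi(s,a)$ are non-negative probability vectors, so expressing $\phi(s,a)$ as a convex combination of the anchor features translates directly into the same convex combination of the anchors' transition rows). This is the step where the anchor assumption is essential: without the non-negativity and simplex structure of the $\phi_k(s,a)$, neither the law-of-total-variance step nor the Cauchy--Schwarz step would go through.
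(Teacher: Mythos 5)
Your proof is correct and follows essentially the same route as the paper's: your Cauchy--Schwarz step is exactly the paper's first application of Jensen's inequality (concavity of $\sqrt{\cdot}$ over the probability weights $\phi_k(s,a)$), and your law-of-total-variance decomposition with the nonnegative between-anchor term is exactly the paper's second Jensen step (convexity of $x^2$) after using the mixture identity $P(\cdot\mid s,a)=\sum_k\phi_k(s,a)P(\cdot\mid s_k,a_k)$. No substantive difference.
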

\begin{proof}
	Since $[\phi_1(s,a),\ldots,\phi_K(s,a)]$ is a vector of probability distribution (due to Assumption 2 without loss of generality), by Jensen's inequality we have,
	{\small
	\begin{align*}
	\sum_{k}\phi_k({s,a})\sqrt{\sigma_{k}[V]}\le \sqrt{\sum_{k}\phi_k({s,a})\sigma_{k}[V]} &= \sqrt{\sum_{k}\phi_k({s,a})\bigg[\sum_{s'}P(s'|s_k,a_k)V^2(s')
		- \Big(\sum_{s'}P(s'|s_k,a_k)V(s')\Big)^2\bigg]}\\
	&=\sqrt{\sum_{s'}P(s'|s,a)V^2(s')
		- \sum_{k}\phi_k({s,a})\bigg[\Big(\sum_{s'}P(s'|s_k,a_k)V(s')\Big)^2\bigg]}.
	\end{align*}}
	By the Jensen's inequality again, we have
	\begin{align*}
	\sum_{k}\phi_k({s,a})&\Big(\sum_{s'}P(s'|s_k,a_k)V(s')\Big)^2 \ge 
	\Big(\sum_{k}\phi_k({s,a})\sum_{s'}P(s'|s_k,a_k)V(s')\Big)^2= \Big(\sum_{s'}P(s'|s,a)V(s')\Big)^2.
	\end{align*}
	Combining the above two equations, we complete the proof.
\end{proof}


\subsection{Monotonicity Preservation}

The next lemma illustrates, conditioning on $\cE^{(i,j)}$ and $\cG^{(i)}$, a monotonicity property is preserved throughout the inner loop.
\begin{lemma}[Preservation of Monotonicity Property]
	\label{lemma:mono}
	Conditioning on the events $\cG^{(i)}$, $\cE^{(i,0)}, \cE^{(i,1)}, \ldots, \cE^{(i,j)}$, we have
	for all
	$s\in \cS, j'\in [0, j]$, 
	\begin{align}
	V_{\theta^{(i, j')}}(s)\le \cT_{\pi_{\theta^{(i, j')}}} V_{\theta^{( i,j')}}[s]\le \cT V_{\theta^{ (i,j')}}[s]\le v^*(s).\label{eqn:ind}
	\end{align}
	Moreover,
	for any fixed policy $\pi^*$, we have, for $j'\in[j] $, 
	\begin{align}\label{eqn:indd}
	v^{*}(s) - V_{\theta^{( i, j')}}(s)\le &\gamma P(\cdot|s,\pi^*(s))^\top(v^* -V_{\theta^{( i, j'-1)}})+ 2\gamma \sum_{k}\phi_k(s,\pi^*(s))\epsilon^{(i,j')}(k).
	\end{align}
\end{lemma}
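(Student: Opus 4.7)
The plan is to induct on $j'\in\{0,1,\ldots,j\}$, simultaneously establishing \eqref{eqn:ind} and \eqref{eqn:indd}. The base case $j'=0$ is precisely the content of $\cG^{(i)}$ (with \eqref{eqn:indd} vacuous or interpreted at $j'=0$ from the same event). The inductive step has two ingredients: a one-sided concentration of $\wb{w}^{(i,j')}$ around the true expectation, together with the anchor non-negativity that lifts this bound from the $K$ anchors to every $(s,a)$.

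First I would combine the events $\cE^{(i,0)}$ and $\cE^{(i,j')}$ via a triangle inequality, giving for each anchor $k$,
\[
\bigl|w^{(i,j')}(k)-P(\cdot|s_k,a_k)^\top V_{\theta^{(i,j'-1)}}\bigr|\le \epsilon^{(i,j')}(k).
\]
Subtracting $\epsilon^{(i,j')}(k)$ and applying the clip $\max\{0,\min\{\cdot,(1-\gamma)^{-1}\}\}$ yields the two-sided sandwich
\[
P(\cdot|s_k,a_k)^\top V_{\theta^{(i,j'-1)}}-2\epsilon^{(i,j')}(k)\le \wb{w}^{(i,j')}(k)\le P(\cdot|s_k,a_k)^\top V_{\theta^{(i,j'-1)}},
\]
the clip being harmless because $0\le P(\cdot|s_k,a_k)^\top V_{\theta^{(i,j'-1)}}\le (1-\gamma)^{-1}$ by the inductive hypothesis \eqref{eqn:ind} at $j'-1$. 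Now exploit the anchor decomposition $P(\cdot|s,a)=\sum_k\phi_k(s,a)P(\cdot|s_k,a_k)$ with non-negative weights (Assumption \ref{assum:lgm}). Taking a convex combination of the upper sandwich gives
\[
\phi(s,a)^\top \wb{w}^{(i,j')}\le P(\cdot|s,a)^\top V_{\theta^{(i,j'-1)}}\qquad\forall\,(s,a),
\]
and of the lower sandwich gives the matching lower bound with a slack of $2\sum_k\phi_k(s,a)\epsilon^{(i,j')}(k)$.

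To finish \eqref{eqn:ind}: adding $r(s,a)$ and multiplying the $\gamma$ factor on the upper side,
\[
\max_a Q_{\theta^{(i,j')}}(s,a)\le \cT V_{\theta^{(i,j'-1)}}[s].
\]
Since by construction $V_{\theta^{(i,j')}}(s)=\max\bigl(V_{\theta^{(i,j'-1)}}(s),\max_a Q_{\theta^{(i,j')}}(s,a)\bigr)$ and the first term is bounded by $\cT V_{\theta^{(i,j'-1)}}[s]$ by the inductive hypothesis, we conclude $V_{\theta^{(i,j')}}\le \cT V_{\theta^{(i,j'-1)}}\le \cT v^*=v^*$. To upgrade this to $V_{\theta^{(i,j')}}\le \cT_{\pi_{\theta^{(i,j')}}}V_{\theta^{(i,j')}}$ pointwise, let $(a^*,h^*)$ be the argmax realizing $V_{\theta^{(i,j')}}(s)$, so $\pi_{\theta^{(i,j')}}(s)=a^*$ and $h^*$ corresponds to some $\wb{w}^{(i',\tilde j)}$ with $(i',\tilde j)\le(i,j')$. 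Applying the upper sandwich at $(i',\tilde j)$ and then Lemma \ref{lemma:mon} to pass from $V_{\theta^{(i',\tilde j-1)}}$ up to $V_{\theta^{(i,j')}}$ yields
\[
V_{\theta^{(i,j')}}(s)= r(s,a^*)+\gamma\phi(s,a^*)^\top \wb{w}^{(i',\tilde j)}\le r(s,a^*)+\gamma P(\cdot|s,a^*)^\top V_{\theta^{(i,j')}}=\cT_{\pi_{\theta^{(i,j')}}}V_{\theta^{(i,j')}}[s].
\]

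For \eqref{eqn:indd} I would specialize the parametric lower bound on $V_{\theta^{(i,j')}}$ to the action $a=\pi^*(s)$ and the index $h=(i,j')$ (a feasible choice in the max),
\[
V_{\theta^{(i,j')}}(s)\ge r(s,\pi^*(s))+\gamma\phi(s,\pi^*(s))^\top \wb{w}^{(i,j')},
\]
and subtract this from $v^*(s)=\cT_{\pi^*}v^*[s]=r(s,\pi^*(s))+\gamma P(\cdot|s,\pi^*(s))^\top v^*$. Invoking the anchor-based lower sandwich on $\phi(s,\pi^*(s))^\top \wb{w}^{(i,j')}$ derived above produces exactly
\[
v^*(s)-V_{\theta^{(i,j')}}(s)\le \gamma P(\cdot|s,\pi^*(s))^\top\bigl(v^*-V_{\theta^{(i,j'-1)}}\bigr)+2\gamma\sum_k\phi_k(s,\pi^*(s))\epsilon^{(i,j')}(k).
\]

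The main delicate point is the interaction between the downward shift by $\epsilon^{(i,j')}(k)$ and the clip to $[0,(1-\gamma)^{-1}]$: I have to argue that the clip never tightens the inequality that matters (using $0\le P(\cdot|s_k,a_k)^\top V_{\theta^{(i,j'-1)}}\le (1-\gamma)^{-1}$, which is available from the previous step of the induction). Everything else is a careful bookkeeping of two-sided concentration, the anchor convex combination, and Lemma \ref{lemma:mon}.
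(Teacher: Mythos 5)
Your proposal is correct and follows essentially the same route as the paper's proof: induction on $j'$ with the base case from $\cG^{(i)}$, a two-sided sandwich $P(\cdot|s_k,a_k)^\top V_{\theta^{(i,j'-1)}}-2\epsilon^{(i,j')}(k)\le \wb{w}^{(i,j')}(k)\le P(\cdot|s_k,a_k)^\top V_{\theta^{(i,j'-1)}}$ obtained from the $\cE$-events (with the clip checked to be harmless), lifted to all $(s,a)$ via the anchor convex combination, and combined with Lemma~\ref{lemma:mon} to handle the max over stored parameters. Your argmax-based treatment of which parameter realizes $V_{\theta^{(i,j')}}(s)$ is just a compressed version of the paper's explicit two-case analysis, and the derivation of \eqref{eqn:indd} by specializing to $a=\pi^*(s)$ and using the lower side of the sandwich matches the paper exactly.
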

\begin{proof}
\textbf{\\
	Proof of \eqref{eqn:ind} by Induction}: 
	We first prove the inequalities in \eqref{eqn:ind} 
	by induction on $j'$.
	The base case of $j'=0$ holds by definition of $\cG^{(i)}$.
	
	Now assuming it holds for $j'-1\ge 0$, let us verify that \eqref{eqn:ind} holds for $j'$.
	For any $s\in \cS$, we rewrite the corresponding value function defined in \eqref{eqn:vpidecoder1} as follows:
	\[
	V_{\theta^{(i, j')}}(s)  = \max\big\{\max_{a}Q_{\theta^{(i,j')}}(s,a), V_{\theta^{(i, j'-1)}}(s)\big\}.
	\] 
	For any $s\in \cS$, there are only two cases to make the above equation hold:
		\begin{enumerate}
			\item $V_{\theta^{(i, j')}}(s) = V_{\theta^{(i, j'-1)}}(s)$ $\Rightarrow$ $\max_{a}Q_{\theta^{(i,j')}}(s,a) < V_{\theta^{(i, j'-1)}}(s)$ and $\pi_{\theta^{(i,j')}}(s) = \pi_{\theta^{(i,j'-1)}}(s)$; 
		\item $V_{\theta^{(i,j')}}(s) = \max_{a}Q_{\theta^{(i,j')}}(s,a)$  $\Rightarrow$ $\max_{a}Q_{\theta^{(i,j')}}(s,a) \ge V_{\theta^{(i, j'-1)}}(s)$ and $\pi_{\theta^{(i,j')}}(s) = \arg\max_{a}Q_{\theta^{(i,j')}}(s,a)$.
	\end{enumerate}
	We investigate the consequences of case 1. 
	Since  \eqref{eqn:ind} holds for $j'-1$,
	we have $V_{\theta^{(i, j')}}(s)=V_{\theta^{(i, j'-1)}}(s)\le v^*(s)$. Moreover, since  \eqref{eqn:ind} holds for $j'-1$ and $\pi_{\theta^{(i,j')}}(s) = \pi_{\theta^{(i,j'-1)}}(s)$, we have
	\begin{align*}
	V_{\theta^{(i, j')}}(s)=V_{\theta^{(i, j'-1)}}(s)&\le \cT_{\pi_{\theta^{(i,j')}}} V_{\theta^{(i, j'-1)}}[s] \hspace{25mm}\text{\codecmt{by induction hypothesis}}\\
	&\le 
	\cT_{\pi_{\theta^{(i,j')}}} V_{\theta^{(i, j')}}[s]
	\hspace{28mm}\text{\codecmt{by Lemma~\ref{lemma:mon} and the monotonicity of $\cT_{\pi}$}}\\
	&\le \cT V_{\theta^{(i, j')}}[s].
	\end{align*}
	We now investigate the consequences of case 2. 
	Notice that conditioning on $\cE^{(i,0)}, \cE^{(i,1)}\ldots, \cE^{(i,j')}$ (by specifying the constant $C$ appropriately), we can verify that, 
	\begin{align*}
	\forall k\in [K]: \quad &\wb{w}^{(i, j')}(k):=\Pi_{[0, H]}(w^{(i, j')}(k) - \epsilon^{(i, j')}(k))
	\le P(\cdot|s_k,a_k)^\top V_{\theta^{(i, j'-1)}},
	\end{align*}
	where $H=(1-\gamma)^{-1}$.
	Thus, for any $a\in\cA$,
	\begin{align*}
	Q_{\theta^{( i, j')}}(s,a) &= r(s,a) + \gamma \phi({s,a})^\top \wb{w}^{(i, j')}
	\le r(s,a) + \gamma \sum_{k\in [K]}\phi_k(s,a) P(\cdot|s_k,a_k)^\top V_{\theta^{(i, j'-1)}}
	= \wb{Q}_{\theta^{(i, j')}}(s,a).
	\end{align*}
	Then we have 
	\begin{align}
	0\le \max_{a}Q_{\theta^{(i,j')}}(s,a)&={Q}_{\theta^{(i,j')}}(s,\pi_{\theta^{(i,j')}}(s));\nonumber\\
	\max_{a}Q_{\theta^{(i,j')}}(s,a)\le\wb{Q}_{\theta^{(i,j')}}(s,\pi_{\theta^{(i,j')}}(s))
	&= \cT_{\pi_{\theta^{(i,j')}}} V_{\theta^{(i,j'-1)}}[s];\nonumber\\
	\max_{a}Q_{\theta^{(i,j')}}(s,a)
	\le \max_{a}\wb{Q}_{\theta^{(i,j'-1)}}(s,a)
	&= \cT V_{\theta^{(i,j'-1)}}[s].
	\label{eqn:understeimate-Q}
	\end{align}
	As a result, we obtain
	\begin{align*}
	0&\le V_{\theta^{(i, j')}}(s) = \max_{a}Q_{\theta^{(i,j')}}(s,a) \le \cT_{\pi_{\theta^{(i,j')}}(s)} V_{\theta^{(i, j'-1)}}[s]\\
	&\le \cT_{\pi_{\theta^{(i,j')}}} V_{\theta^{(i, j')}}[s]
	\hspace{20mm}\text{\codecmt{by Lemma~\ref{lemma:mon} and the monotonicity of $\cT_{\pi}$}}\\
	&\le \cT V_{\theta^{(i, j')}}[s]. 
	\end{align*}
	We see that $	0\le V_{\theta^{(i, j')}}(s) 
	\le \cT_{\pi_{\theta^{(i,j')}}} V_{\theta^{(i, j')}}[s]
	\le \cT V_{\theta^{(i, j')}}[s] $
	holds in both cases 1 and 2.
	Also note that since \eqref{eqn:ind} holds for $j'-1$, we have $V_{\theta^{(i, j'-1)}}\le v^*$. It follows from the monotonicity of the Bellman operator that
	\[
	0\le V_{\theta^{(i, j')}}(s)\le \cT_{\pi_{\theta^{(i,j')}}} V_{\theta^{(i, j'-1)}}[s]\le 
	\cT_{\pi_{\theta^{(i,j')}}} v^*[s] \le v^*(s).
	\]
	This completes the induction.

	\textbf{\\
	Proof of \eqref{eqn:indd}}: 	
	Let $\pi^*$ be some fixed optimal policy.
	For each $j'\in [j]$, by \eqref{eqn:vpidecoder1}, we have
	\[
	V_{\theta^{(i,j')}}(s) \ge \max_{a\in \cA} Q_{\theta^{(i,j')}}(s,a):= \max_{a\in \cA}\big[r(s, a) + \gamma \phi(s,a)^{\top} \wb{w}^{(i,j')}\big]. 
	\]
	By definition of $\cE^{(i,j')}$, we have
	\[
	\forall k\in [K]:\quad \wb{w}^{(i,j')}(k) \ge w^{(i,j')}(k) - \epsilon^{(i,j')}(k) \ge P(\cdot|s_k,a_k)^\top V_{\theta^{( i, j'-1)}} - 2\epsilon^{(i,j')}(k).
	\]
	Therefore,
	\begin{align*}
	V_{\theta^{(i, j')}}(s)&\ge 
	\max_a\Big[r(s,a) + \gamma \sum_{k}\phi_k(s,a) \big(P(\cdot|s_k,a_k)^\top V_{\theta^{( i, j'-1)}} - 2\epsilon^{(i,j')}(k)\big)\Big].
	\end{align*}
	Hence,
	{\small
	\begin{align*}
	v^{*}(s) - V_{\theta^{(i, j')}}(s) &\le  r^{\pi^*}(s) + \gamma P^{\pi^*}(\cdot|s)^\top v^{*}  - \max_a\Big[r(s,a) + \gamma \sum_{k}\phi_{k}(s,a) \big(P(\cdot|s_k,a_k)^\top V_{\theta^{(i, j'-1)}} - 2\epsilon^{(i,j')}(k)\big)\Big]\\
	& \le r^{\pi^*}(s) + \gamma P^{\pi^*}(\cdot|s)^\top v^{*}   - \Big[r(s,\pi^*(s)) + \gamma \sum_{k}\phi_{k}(s,\pi^*(s)) \big(P(\cdot|s_k,a_k)^\top V_{\theta^{( i, j'-1)}} - 2\epsilon^{(i,j')}(k)\big)\Big] \\
	&= \gamma P^{\pi^*}(\cdot|s)^\top(v^* -V_{\theta^{(i, j'-1)}})
	+ 2\gamma \sum_{k}\phi_{k}(s,\pi^*(s))\epsilon^{(i,j')}(k),
	\end{align*}}
	where $P^{\pi^*}(\cdot|s) = P(\cdot|s, \pi^*(s))$ and we use the fact that
	$P^{\pi^*}(\cdot|s) = \sum_{k}\phi_{k}(s,\pi^*(s)) P(\cdot|s_k,a_k)$ in the last equality. 
\end{proof}

\subsection{Accuracy of Confidence Bounds}
We show that the mini-batch sample sizes picked in Algorithm 2 are sufficient to control the error occurred in the inner-loop iterations, such that the events $\cE^{(i,0)}, \cE^{(i,1)}, \ldots, \cE^{(i,R)}$ jointly happen with close-to-1 probability.

\begin{lemma}\label{lemma-9}For $i=0, 1, 2, \ldots, R'$,
	\[
	\PP[\cE^{(i,0)}, \cE^{(i,1)}, \ldots, \cE^{(i,R)}|\cG^{(i)}]
	\ge 1-\delta/R'.
	\]
\end{lemma}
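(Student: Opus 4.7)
The plan is to establish each $\cE^{(i,j)}$ as a concentration event for a fresh batch of i.i.d.\ samples and combine all of them by a union bound, absorbing the total failure probability into $\delta/R'$. Two different tail inequalities are required: $\epsilon^{(i,0)}$ is a Bernstein-type bound set by the unknown variance $\sigma_k[V_{\theta^{(i,0)}}]$, while for $j\ge 1$ the event $\cE^{(i,j)}$ is a Hoeffding-type bound whose scale is set by the a priori range of $V_{\theta^{(i,j-1)}} - V_{\theta^{(i,0)}}$.

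First, I will handle $\cE^{(i,0)}$. Conditional on $\theta^{(i,0)}$, the samples $\{V_{\theta^{(i,0)}}(x_k^{(\ell)})\}_{\ell=1}^m$ drawn in Line~\ref{alg:step-outer-init} are i.i.d.\ under $P(\cdot|s_k,a_k)$ and take values in $[0,H]$ with $H=(1-\gamma)^{-1}$, so Bernstein's inequality gives, with probability at least $1-\delta'$ for $\delta' = \delta/[3R'(R+1)K]$,
\[
|w^{(i,0)}(k) - P(\cdot|s_k,a_k)^\top V_{\theta^{(i,0)}}| \lesssim \sqrt{\sigma_k[V_{\theta^{(i,0)}}]\log(1/\delta')/m} + H\log(1/\delta')/m.
\]
To convert the unknown variance into the empirical surrogate $\sigma^{(i,0)}(k)$ appearing in $\epsilon^{(i,0)}(k)$, I will separately apply Hoeffding to the second-moment estimator $z^{(i,0)}(k)$ (summands in $[0,H^2]$) and to $w^{(i,0)}(k)$ (summands in $[0,H]$), which together yield $|\sigma^{(i,0)}(k) - \sigma_k[V_{\theta^{(i,0)}}]| \lesssim H^2\sqrt{\log(1/\delta')/m}$. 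Taking square roots and using subadditivity $\sqrt{a+b}\le\sqrt{a}+\sqrt{b}$ upgrades the Bernstein term into $\sqrt{\sigma^{(i,0)}(k)\log(1/\delta')/m} + O\bigl(H(\log(1/\delta')/m)^{3/4}\bigr)$, matching $\epsilon^{(i,0)}(k)$ exactly up to absolute constants; the $m^{-3/4}$ slack is the signature of this self-bounding step.

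For the events $\cE^{(i,j)}$ with $j\ge 1$, I will induct on $j$. Let $\mathcal{G}_{i,j-1}$ denote the $\sigma$-algebra generated by all samples drawn through inner iteration $j-1$; the samples $\{x_k^{(\ell)}\}_{\ell=1}^{m_1}$ used at iteration $j$ are fresh and independent of $\mathcal{G}_{i,j-1}$. On the inductive event $\cG^{(i)}\cap \cE^{(i,0)}\cap\cdots\cap\cE^{(i,j-1)}$, Lemma~\ref{lemma:mono} applies and gives $V_{\theta^{(i,0)}}\le V_{\theta^{(i,j-1)}}\le v^*$ pointwise, which combined with the $\cG^{(i)}$ bound $v^* - V_{\theta^{(i,0)}}\le c\cdot 2^{-i}/(1-\gamma)$ forces the a priori range $0\le V_{\theta^{(i,j-1)}}(s) - V_{\theta^{(i,0)}}(s)\le c\cdot 2^{-i}/(1-\gamma)$ for every $s$. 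Because $V_{\theta^{(i,j-1)}}$ and $V_{\theta^{(i,0)}}$ are $\mathcal{G}_{i,j-1}$-measurable, Hoeffding's inequality applied conditionally to the i.i.d.\ samples of the bounded difference yields the desired concentration with probability at least $1-\delta'$. A union bound over $k\in[K]$, over $j\in\{0,\ldots,R\}$, and over the two sides of each tail event multiplies $\delta'$ by at most $2(R+1)K$, which by the choice of $\delta'$ is $\delta/R'$.

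The main obstacle is the apparent circularity between the monotonicity invariant (needed to bound the sample range in $\cE^{(i,j)}$) and the very concentration events whose validity feeds Lemma~\ref{lemma:mono}; the induction above resolves this cleanly by carrying the conclusion of Lemma~\ref{lemma:mono} at step $j-1$ as part of the invariant before analyzing step $j$. The only remaining work is to choose the constant $C$ in the definitions of $m$, $m_1$, $\epsilon^{(i,0)}(k)$, and $\epsilon^{(i,j)}(k)$ large enough to absorb the universal constants arising from Bernstein's and Hoeffding's inequalities and from the variance self-bounding step.
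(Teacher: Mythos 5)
Your proposal is correct and follows essentially the same route as the paper's proof: Bernstein plus a Hoeffding-based empirical-variance surrogate (yielding the $m^{-3/4}$ slack via $\sqrt{a+b}\le\sqrt a+\sqrt b$) for $\cE^{(i,0)}$, conditional Hoeffding with the range $c\,2^{-i}/(1-\gamma)$ supplied by Lemma~\ref{lemma:mono} and $\cG^{(i)}$ for $\cE^{(i,j)}$ with $j\ge1$, and an induction that breaks the circularity by conditioning on the earlier events. The only cosmetic difference is that the paper combines the events via the chain rule of conditional probabilities rather than a flat union bound, which is the cleaner way to express the same accounting given that each event's tail bound is stated conditionally on its predecessors.
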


\begin{proof} We analyze each event separately.

\textbf{\\Probability of $\cE^{(i,0)}$}: We first show that $\Pr[\cE^{(i,0)}|\cG^{(i)}]\ge 1-\delta/(RR')$.
	Note that $V_{\theta^{(i, 0)}}(s)\in[0,\frac1{1-\gamma}]$ is determined by the samples obtained before the outer-iteration $i$ starts, therefore samples obtained in iteration $(i,j)$ for $j\ge0$ are independent with $V_{\theta^{(i, 0)}}$.
	Hence, conditioning on $\cG^{(i)}$, for a fixed $\delta\in(0,1)$ and $k\in [K]$, by the Bernstein's and the Hoeffding's inequalities, for some constant $c_1>0$, the following two inequalities hold with probability at least $1-\delta$,
	\begin{align*}
	\bigg|w^{(i,0)}(k) - P(\cdot|s_k,a_k)^\top V_{\theta^{(i, 0)}}\bigg|
	&\le \min\bigg\{c_1 \sqrt{\frac{\log[\delta^{-1}]\sigma_k[V_{\theta^{(i, 0)}}]}{m}}  + \frac{c_1\log\delta^{-1}}{(1-\gamma)m},\quad 
	c_1(1-\gamma)^{-1}\cdot \sqrt{\frac{\log[\delta^{-1}]}{m}}
	\bigg\}
	\\
	\bigg|z^{(i,0)}(k) - P(\cdot|s_k,a_k)^\top V_{\theta^{(i, 0)}}^2\bigg|
	&\le c_1 (1-\gamma)^{-2}\cdot \sqrt{\frac{\log[\delta^{-1}]}{m}}
	,
	\end{align*}
	where we recall the notation $\sigma_k[V_{\theta^{(i, 0)}}] =  P(\cdot|s_k,a_k)^\top V_{\theta^{(i, 0)}}^2 - [P(\cdot|s_k,a_k)^\top V_{\theta^{(i, 0)}}]^2 \le (1-\gamma)^{-2}$ (see D.1).
	Conditioning on the preceding two inequalities, we have
	\[
	\bigg|\sigma_k[V_{\theta^{(i, 0)}}] - \sigma^{(i,0)}(k)\bigg| = \bigg|\sigma_k[V_{\theta^{(i, 0)}}] - \Big(z^{(i,0)}(k) - w^{(i,0)}(k)^2\Big)\bigg| \le 
	c_1' (1-\gamma)^{-2}\cdot \sqrt{\frac{\log[\delta^{-1}]}{m}} 
	\]
	for some constant $c_1'$, where $\sigma^{(i,0)}(k):=z^{(i,0)} -(w^{(i,0)}(k))^2$ according to tep 13 of Alg.\ 2.
	Thus,
	$\sigma_k[V_{\theta^{( i, 0)}}]\le  \sigma^{(i,0)}(k) 
	+ c_1' (1-\gamma)^{-2}\cdot \sqrt{\frac{\log[\delta^{-1}]}{m}}$. 
	We further obtain,
	\begin{align*}
	&\sqrt{\sigma^{(i,0)}(k) 
		+ c_1' (1-\gamma)^{-2}\cdot \sqrt{\frac{\log[\delta^{-1}]}{m}}}\le \sqrt{\sigma^{(i,0)}(k)} + \Big(c_1'^2 (1-\gamma)^{-4}\frac{\log[\delta^{-1}]}{m}\Big)^{1/4}.
	\end{align*}
	By plugging in $\delta\gets \delta/(KR'R)$, we have, 
	\begin{align*}
	\bigg|w^{(i,0)}(k) - P(\cdot|s_k,a_k)^\top V_{\theta^{(i,0)}}\bigg|&\le 
	 c_1 \sqrt{\frac{\log[KRR'\delta^{-1}]\sigma_k[V_{\theta^{(i, 0)}}]}{m}}  + \frac{c_1\log(KRR'\delta^{-1})}{(1-\gamma)m}\\
	 &\le  \Theta\bigg[\sqrt{\frac{\log[R'RK\delta^{-1}]\cdot\sigma^{(i,0)}(k)}{ m}} + \frac{\log[R'RK\delta^{-1}]}{(1-\gamma)m^{3/4}}\bigg]\\
	 &= \epsilon^{(i,0)}(k)
	\end{align*}
	with probability at least $1-\delta/(KR'R)$,
	where $\epsilon^{(i,0)}(k)$ is defined in Step 13 of Algorithm~\ref{alg:learn}.
	Since
	$\sigma^{(i,0)}(k) \le \sigma_k[V_{\theta^{(i, 0)}}]  
	+ c_1' (1-\gamma)^{-2}\cdot \sqrt{\frac{\log[\delta^{-1}]}{m}}$, we further have
	\begin{align*}
	\epsilon^{(i,0)}(k)
	&\le \Theta\bigg[\sqrt{\log(RR'K\delta^{-1})\sigma_k[V_{\theta^{(i,0)}}]/m} + \bigg( (1-\gamma)^{-4}\frac{\log[RR'K\delta^{-1}]^4}{m^3}\bigg)^{1/4}\bigg].
	\end{align*}
	Therefore, by applying an union bound over all $k\in [K]$, we have
	\[
	\PP[\cE^{(i,0)}|\cG^{(i)}] \ge 1-\delta/(RR').
	\]
	Reminder that if $\cE^{(i,0)}$ happens, then
	$w^{(i,0)} - \epsilon^{(i,0)} \le P(\cdot|s_k,a_k)^\top V_{\theta^{(i,0)}}$.
	
	\textbf{\\Probability of $\cE^{(i,j)}$ by Induction}:	
	We now prove by induction that
	\begin{align}
	\label{eqn:induction_events}
	\PP[\cE^{(i,j)}| \cE^{(i,j-1)}, \cE^{(i,j-2)},\ldots, \cE^{(i,0)}, \cF^{(i)}] \ge 1-\delta/(RR').
	\end{align}
	For the base case $j=1$, we have
	\begin{align*}
	w^{(i,1)} &= w^{(i,0)}\quad \text{and} \quad
	\epsilon^{(i,1)} = \epsilon^{(i,0)} + \Theta(1-\gamma)^{-1}2^{-i}\sqrt{\log(RR'K/\delta)},
	\end{align*}
	therefore $\PP[\cE^{(i,1)}| \cE^{(i,0)}, \cG^{(i)}] = 1$.
	Now consider $j$. Conditioning on $\cE^{(i,j-1)}, \cE^{(i,j-2)},\ldots, \cE^{(i,0)}, \cF^{(i)}$,  we have
	with probability at least $1-\delta$,
	\begin{align*}
	\Big|\frac{1}{m_1} &\sum_{\ell=1}^{m_1}
	\Big(
	V_{\theta^{(i, j-1)}}(x_k^{(\ell)}) -
	V_{\theta^{(i,0)}}(x_k^{(\ell)})
	\Big) - P(\cdot|s_k,a_k)^\top\Big(
	V_{\theta^{(i, j-1)}} -
	V_{\theta^{(i, 0)}}
	\Big)
	\Big| \\
	&\le c_2\max_{s}|V_{\theta^{( i, j-1)}}(s) -
	V_{\theta^{( i, 0)}}(s)|\cdot \sqrt{\frac{\log(\delta^{-1})}{m_1}}
	\\
	&\le c_2\max_{s}|v^*(s) -
	V_{\theta^{(i, 0)}}(s)|\cdot \sqrt{\log(\delta^{-1})/m_1}
	\qquad\qquad\text{\codecmt{$V_{\theta^{(i,0)}}\le V_{\theta^{(i,j-1)}}\le v^*$}}
	\\
	&\le c_22^{-i}(1-\gamma)^{-1}\cdot \sqrt{\log(\delta^{-1})/m_1}.  \qquad\qquad\qquad\qquad\text{\codecmt{By definition of $\cG^{(i)}$}}
	\end{align*}
	Letting $\delta\gets \delta/(RR'K)$ and applying a union bound over $k\in[K]$, we obtain \eqref{eqn:induction_events}.\\
	
\noindent\textbf{Probability of Joint Events}:	
	Finally, we have that 
	{\small
	\begin{align*}
	\Pr[\cE^{(i,0)}\cap \cE^{(i,1)} \ldots \cap \cE^{(i,R)}|\cG^{(i)}] &= 
	\Pr[\cE^{(i,0)}|\cG^{(i)}]\Pr[\cE^{(i,1)}|\cE^{(i,0)}, \cG^{(i)}]\ldots\Pr[\cE^{(i,R)}| \cE^{(i,0)}, \cE^{(i,1)}, \ldots, \cE^{(i,R-1)}, \cG^{(i)}] \\
	&\ge 1-\delta / R'.
	\end{align*}}
\end{proof}

\begin{lemma}[Upper Bound of $\epsilon^{(i,j)}(k)$]\label{lemma-10}
	Conditioning on the events $\cF^{(i)}$, $\cE^{(i,0)}, \cE^{(i,1)}, \ldots, \cE^{(i,j)}$,
	we have, for all $k\in [K]$
	\begin{align*}
	\epsilon^{(i,j)}(k)&\le C\bigg[\sqrt{\frac{\log(R'RK\delta^{-1})\sigma_k[v^*]}{m}} + \frac{\log(R'RK\delta^{-1})}{(1-\gamma)m^{3/4}}
	+2^{-i}\sqrt{\frac{\log(R'RK\delta^{-1})}{(1-\gamma)^2m_1}}
	\bigg]
	\end{align*}
	for some universal constant $C>0$.
\end{lemma}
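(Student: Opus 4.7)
The plan is to unfold the definition of $\epsilon^{(i,j)}(k)$ from Algorithm~\ref{alg:learn} and reduce every occurrence of the empirical variance $\sigma^{(i,0)}(k)$ to the intrinsic variance $\sigma_k[v^*]$, picking up controlled corrections along the way. Writing $\Lambda := \log(R'RK\delta^{-1})$ for brevity, the algorithm defines
\[
\epsilon^{(i,j)}(k) \;=\; \epsilon^{(i,0)}(k) \;+\; \Theta\!\left( (1-\gamma)^{-1} 2^{-i} \sqrt{\Lambda/m_1} \right),\qquad
\epsilon^{(i,0)}(k) \;=\; \Theta\!\left( \sqrt{\Lambda\, \sigma^{(i,0)}(k)/m} + \Lambda \, (1-\gamma)^{-1} m^{-3/4} \right),
\]
so the inner-loop term already matches the third term of the target bound, and it remains to control $\sqrt{\Lambda \sigma^{(i,0)}(k)/m}$.

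The first step is a variance-of-variance argument that was carried out inside the proof of Lemma~\ref{lemma-9}: on the event $\cE^{(i,0)}$ the Hoeffding/Bernstein bounds for $w^{(i,0)}(k)$ and $z^{(i,0)}(k)$ give
\[
\bigl|\sigma^{(i,0)}(k) - \sigma_k[V_{\theta^{(i,0)}}]\bigr| \;\le\; c\,(1-\gamma)^{-2}\sqrt{\Lambda/m}.
\]
I would then apply $\sqrt{a+b}\le \sqrt{a}+\sqrt{b}$ to conclude
\[
\sqrt{\sigma^{(i,0)}(k)} \;\le\; \sqrt{\sigma_k[V_{\theta^{(i,0)}}]} \;+\; c'\,(1-\gamma)^{-1}(\Lambda/m)^{1/4}.
\]
Multiplying by $\sqrt{\Lambda/m}$, the second piece contributes $O\bigl(\Lambda^{3/4}(1-\gamma)^{-1}m^{-3/4}\bigr)$, which is dominated by the existing $\Lambda (1-\gamma)^{-1} m^{-3/4}$ term of the target bound.

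The second step is to replace $\sigma_k[V_{\theta^{(i,0)}}]$ by $\sigma_k[v^*]$ using the triangle inequality for standard deviation (the lemma proved just before Lemma~\ref{lemma:var}) together with $\cG^{(i)}$. Writing $v^* = V_{\theta^{(i,0)}} + (v^*-V_{\theta^{(i,0)}})$,
\[
\sqrt{\sigma_k[V_{\theta^{(i,0)}}]} \;\le\; \sqrt{\sigma_k[v^*]} + \sqrt{\sigma_k[v^* - V_{\theta^{(i,0)}}]} \;\le\; \sqrt{\sigma_k[v^*]} + \|v^* - V_{\theta^{(i,0)}}\|_\infty \;\le\; \sqrt{\sigma_k[v^*]} + c\,2^{-i}(1-\gamma)^{-1},
\]
the last inequality using the $\cG^{(i)}$ bound. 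Multiplying by $\sqrt{\Lambda/m}$ produces $\sqrt{\Lambda \sigma_k[v^*]/m}$ plus a term $O\bigl(2^{-i}(1-\gamma)^{-1}\sqrt{\Lambda/m}\bigr)$. Since $m \ge m_1$ by the choices in line~7 of Algorithm~\ref{alg:learn}, this extra term is absorbed into $2^{-i}\sqrt{\Lambda/((1-\gamma)^2 m_1)}$, matching the third term of the target bound. Combining the two displays and the inner-loop increment yields the claimed three-term upper bound.

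The main obstacle is purely bookkeeping: one must check that each ``error on top of error'' term (the $(1-\gamma)^{-2}\sqrt{\Lambda/m}$ slack in the variance estimate, and the $2^{-i}(1-\gamma)^{-1}$ slack in replacing $V_{\theta^{(i,0)}}$ by $v^*$) is dominated by a term already present in the stated bound. Conceptually nothing new is required beyond the triangle inequality for $\sqrt{\sigma_k[\cdot]}$, the concentration facts used inside Lemma~\ref{lemma-9}, and the $\ell_\infty$ bound $\|v^*-V_{\theta^{(i,0)}}\|_\infty\lesssim 2^{-i}/(1-\gamma)$ guaranteed by $\cG^{(i)}$.
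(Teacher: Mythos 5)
Your proposal is correct and follows essentially the same route as the paper: bound $\epsilon^{(i,0)}(k)$ in terms of $\sigma_k[V_{\theta^{(i,0)}}]$ (the paper bakes this into the definition of the event $\cE^{(i,0)}$, established in Lemma~\ref{lemma-9}, while you unfold the empirical-variance correction explicitly), then pass to $\sigma_k[v^*]$ via the triangle inequality for standard deviations together with the $\cG^{(i)}$ bound $\|v^*-V_{\theta^{(i,0)}}\|_\infty\lesssim 2^{-i}/(1-\gamma)$, absorb the resulting slack using $m\ge m_1$, and add the inner-loop increment. The bookkeeping of the dominated error terms matches the paper's.
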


\begin{proof}
	Conditioning on  $\cF^{(i)}$, $\cE^{(i,0)}, \cE^{(i,1)}, \ldots, \cE^{(i,j)}$, we have
	\begin{align*}
	\epsilon^{(i,0)}(k) &\le
	c_1\Bigg[\sqrt{\frac{\log(R'RK\delta^{-1})\sigma_k[V_{\theta^{( i, 0)}}]}{m}} + \frac{\log(R'RK\delta^{-1})}{(1-\gamma)m^{3/4}}\Bigg]\\
	&\le 
	c_1'\Bigg[\sqrt{\frac{\log(R'RK\delta^{-1})\sigma_k[v^*]}{m}} + \frac{\log(R'RK\delta^{-1})}{(1-\gamma)m^{3/4}}+2^{-i}\sqrt{\frac{\log(R'RK\delta^{-1})}{(1-\gamma)^2m}}\Bigg],
	\end{align*}
	for some generic constants $c_1, c_1'$, where we use the fact that
	$\|V_{\theta^{(i,0)}} - v^*\|_{\infty}\le 2^{-i}/(1-\gamma)$ and the triangle inequality.
	Using the definition of $\epsilon^{(i,j)}$ and the fact $m_1\leq m$, we have
	\begin{align*}
	\epsilon^{(i,j)}(k) &= \epsilon^{(i,0)}(k)
	+ c_2 2^{-i}\sqrt{\frac{\log(R'RK\delta^{-1})}{(1-\gamma)^2m_1}}\\
	&\le 
	c_2'\Bigg[\sqrt{\frac{\log(R'RK\delta^{-1})\sigma_k[v^*]}{m}} + \frac{\log(R'RK\delta^{-1})}{(1-\gamma)m^{3/4}}+2^{-i}\sqrt{\frac{\log(R'RK\delta^{-1})}{(1-\gamma)^2m_1}}\Bigg],
	\end{align*}
	for some generic constants $c_2, c_2'$, where we use the fact that $m\ge m_1$. 
	This concludes the proof.
\end{proof}

\subsection{Error Accumulation in One Outer Iteration}

\begin{lemma}
	\label{lemma:f-event}
	For $i=0, 1, 2, \ldots, R'$,
	$
	\PP[\cG^{(i+1)}|\cG^{(i)}] \ge 1 -\delta/(R'+1).
	$
\end{lemma}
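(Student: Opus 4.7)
My plan is to condition on $\cG^{(i)}$ together with the joint events $\cE^{(i,0)}, \cE^{(i,1)}, \ldots, \cE^{(i,R)}$, which hold with conditional probability at least $1-\delta/R'$ by Lemma~\ref{lemma-9}. Under these events, Lemma~\ref{lemma:mono} already furnishes $0\le V_{\theta^{(i,R)}} \le \cT_{\pi_{\theta^{(i,R)}}} V_{\theta^{(i,R)}} \le v^*$ pointwise; since $\theta^{(i+1,0)}=\theta^{(i,R)}$, the first two clauses of $\cG^{(i+1)}$ are automatic. The nontrivial claim is the quantitative gap $\|v^* - V_{\theta^{(i,R)}}\|_\infty \le c\, 2^{-(i+1)}/(1-\gamma)$.

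To obtain this gap, I would unroll the recursion \eqref{eqn:indd} along a fixed optimal policy $\pi^*$. Writing $\epsilon^{(i,j)}_{\pi^*}(s) := \sum_k \phi_k(s,\pi^*(s))\, \epsilon^{(i,j)}(k)$ and iterating $R$ times gives, pointwise,
\[
v^* - V_{\theta^{(i,R)}} \;\le\; \gamma^R (v^*-V_{\theta^{(i,0)}}) \;+\; 2\gamma \sum_{j=1}^{R} \gamma^{R-j} (P^{\pi^*})^{R-j}\, \epsilon^{(i,j)}_{\pi^*}.
\]
With $R = \Theta((1-\gamma)^{-1}\log(R'/\epsilon))$, the first term is bounded by $\gamma^R \cdot c\, 2^{-i}/(1-\gamma) \le c\, 2^{-(i+2)}/(1-\gamma)$ using the $\cG^{(i)}$ hypothesis. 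The crux of the argument is showing that the accumulated statistical error sum is of the same order.

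For the error sum, Lemma~\ref{lemma-10} supplies
\[
\epsilon^{(i,j)}(k) \;\lesssim\; \sqrt{\tfrac{\log(R'RK\delta^{-1})\, \sigma_k[v^*]}{m}} \;+\; \tfrac{\log(R'RK\delta^{-1})}{(1-\gamma)\, m^{3/4}} \;+\; \tfrac{2^{-i}}{(1-\gamma)}\sqrt{\tfrac{\log(R'RK\delta^{-1})}{m_1}}.
\]
Taking the $\phi_k(s,\pi^*(s))$-weighted average and invoking Lemma~\ref{lemma:var}, which is the step where the anchor convex-combination assumption enters, one replaces $\sum_k \phi_k(s,\pi^*(s))\sqrt{\sigma_k[v^*]}$ by $\sqrt{\sigma_{s,\pi^*(s)}[v^*]}$. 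After collecting the geometric factors and using $\sum_{j\ge 0}\gamma^{j}(P^{\pi^*})^j = (I-\gamma P^{\pi^*})^{-1}$ as an upper envelope, the dominant piece becomes $\tfrac{2\gamma}{\sqrt{m}}\sqrt{\log(\cdot)}\cdot (I-\gamma P^{\pi^*})^{-1}\sqrt{\sigma^{\pi^*}_{v^*}}$. Applying the conditional law of total variance along $\pi^*$, exactly as sketched in the proof of Theorem~\ref{thm:main-thm}, bounds this entrywise by $\tilde O(\sqrt{(1-\gamma)^{-3}/m})\cdot \mathbf{1}$. Plugging in the algorithm's choices $m = \Theta(\log^{4/3}(R'RK\delta^{-1})/[\epsilon^2(1-\gamma)^3])$ and $m_1 = \Theta(\log(R'RK\delta^{-1})/(1-\gamma)^2)$ gives a leading piece of order $c\, 2^{-(i+3)}/(1-\gamma)$, while the $m^{-3/4}$ and $m_1$ contributions evaluate to $\tilde O((1-\gamma)^{-3}\, m^{-3/4})$ and $\tilde O(2^{-i}(1-\gamma)^{-2}/\sqrt{m_1})$ respectively, both of which are absorbed.

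The step I expect to be the main obstacle is precisely the application of Lemma~\ref{lemma:var} followed by the law-of-total-variance bound: without the convex-combination structure provided by the anchors, one cannot pull the square root inside the $k$-sum, and the cumulative error would carry an extra $\sqrt{K}$ factor and an additional $(1-\gamma)^{-1}$, breaking the target $(1-\gamma)^{-3}$ scaling. Once that step is in place, a union bound between the $1-\delta/R'$ guarantee of Lemma~\ref{lemma-9} and the deterministic reduction above yields $\PP[\cG^{(i+1)} \mid \cG^{(i)}] \ge 1 - \delta/(R'+1)$, as required.
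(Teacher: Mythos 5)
Your proposal is correct and follows essentially the same route as the paper's own proof: condition on $\cG^{(i)}$ and the joint events $\cE^{(i,0)},\ldots,\cE^{(i,R)}$ via Lemma~\ref{lemma-9}, unroll the recursion \eqref{eqn:indd} along $\pi^*$, bound the accumulated error using Lemma~\ref{lemma-10}, Lemma~\ref{lemma:var}, and the law of total variance, and then plug in the choices of $m$, $m_1$, and $R$. You also correctly identify the anchor-based Jensen step (Lemma~\ref{lemma:var}) as the crux, which is exactly where the paper's argument hinges.
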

\begin{proof}[Proof of Lemma~\ref{lemma:f-event}]
	Conditioning on $\cG^{(i)}$, suppose that the events $\cE^{(i,0)}, \cE^{(i,1)}, \ldots, \cE^{(i,R)}$ all happen, which has probability at least $1-\delta/R'$ according to Lemma \ref{lemma-9}.
	For any $s\in \cS$, we analyze the total error accumulated in the $i$-th outer iteration:
	\begin{align*}
	v^{*}(s)- V_{\theta^{(i, j)}}(s)
	&\le \gamma P^{\pi^*}(\cdot|s)^\top(v^* -V_{\theta^{(i, j-1)}})+ 2\gamma \sum_{k}\phi_k(s,\pi^*(s))\epsilon^{(i,j)}(k)
	\qquad\qquad\text{\codecmt{Lemma~\ref{lemma:mono}}}\\
	&\le \gamma^2 \sum_{s'}P^{\pi^*}(s'|s)^\top P^{\pi^*}(\cdot|s')^\top(v^* -V_{\theta^{(i, j-2)}})+ 
	2\gamma^2 P^{\pi^*}(\cdot|s)^\top 
	\sum_{k}\phi_k(\cdot,\pi^*(\cdot))\epsilon^{(i,j-1)}(k)
	\\
	&\qquad+
	2\gamma \sum_{k}\phi_k(s,\pi^*(s))\epsilon^{(i,j)}(k)
	\hspace{2.25cm}\text{\codecmt{applying Lemma~\ref{lemma:mono} again on $v^* - V_{\theta^{(i,j-1)}}$}}
	\\
	&\le\ldots\qquad\qquad\qquad\qquad\qquad\qquad\qquad\qquad~\qquad\text{\codecmt{applying Lemma~\ref{lemma:mono} recursively}}
	\\
	\end{align*}
	\begin{align*}
	\hspace{2.5cm}&\le 
	\gamma^{j}[\big(P^{\pi^*}\big)^{j}(v^* -V_{\theta^{(i, 0)}})](s)+ 2\sum_{j'=0}^{j-1} 
	\gamma^{j'+1} \sum_{k,s'}\big(P^{\pi^*}\big)^{j'}_{s,s'}\phi_k(s',\pi^*(s'))\epsilon^{(i,j-j')}(k)\\
	&\le 
	\gamma^{j}(1-\gamma)^{-1}
	+ C\sum_{j'=0}^{j-1} \gamma^{j'+1}\Bigg[
	\frac{\log(R'RK\delta^{-1})}{(1-\gamma)m^{3/4}}+2^{-i}\sqrt{\frac{\log(R'RK\delta^{-1})}{(1-\gamma)^2m_1}}\Bigg]\\
	&\qquad+ C\sum_{j'=0}^{j-1} 
	\gamma^{j'+1} \sum_{s'}\big(P^{\pi^*}\big)^{j'}_{s,s'}\cdot\sum_k\phi_k(s',\pi^*(s'))\sqrt{\frac{\log(R'RK\delta^{-1})\sigma_{k}[v^*]}{m}}
	\\ 
	&\qquad\qquad\quad\text{\codecmt{using $\|v^* -V_{\theta^{(i, 0)}}\|_{\infty}\leq\frac1{1-\gamma}$ and the upperbound of $\epsilon^{(i, j)}$ (Lemma \ref{lemma-10})}}\\
	&\le 
	\gamma^{j}(1-\gamma)^{-1}
	+ C\sum_{j'=0}^{j-1} \gamma^{j'+1}\Bigg[
	\frac{\log(R'RK\delta^{-1})}{(1-\gamma)m^{3/4}}+2^{-i}\sqrt{\frac{\log(R'RK\delta^{-1})}{(1-\gamma)^2m_1}}\Bigg]\\
	&\qquad+ C\sum_{j'=0}^{j-1} 
	\gamma^{j'+1} \sum_{s'}\big(P^{\pi^*}\big)^{j'}_{s,s'}\cdot\sqrt{\frac{\log(R'RK\delta^{-1})\sigma_{s', \pi^*(s')}[v^*]}{m}}\\
	& \qquad\qquad\qquad\qquad\qquad\qquad\qquad\qquad\qquad\qquad\quad\text{\codecmt{applying Lemma~\ref{lemma:var}}}\\
	&= 
	\gamma^{j}(1-\gamma)^{-1}
	+ C\frac{1-\gamma^j}{1-\gamma}\cdot\Bigg[
	\frac{\log(R'RK\delta^{-1})}{(1-\gamma)m^{3/4}}
	+2^{-i}\sqrt{\frac{\log(R'RK\delta^{-1})}{(1-\gamma)^2m_1}}\Bigg]+ \\
	&\qquad C\sum_{j'=0}^{j-1} 
	\gamma^{j'+1} \sum_{s'}\big(P^{\pi^*}\big)^{j'}_{s,s'}\cdot\sqrt{\frac{\log(R'RK\delta^{-1})\sigma_{s', \pi^*(s')}[v^*]}{m}},
	\end{align*}
	where $C$ is a generic constant.
	By Lemma~C.1 of \cite{sidford2018near} (a form of law of total variance for the Markov chain under $\pi^*$), we have,
	\[
	\sum_{j'=0}^{j-1} 
	\gamma^{j'+1} \sum_{s'}\big(P^{\pi^*}\big)^{j'}_{s,s'}\sqrt{\sigma_{s', \pi^*(s')}[v^*]}
	\le C'\sqrt{(1-\gamma)^{-3}}
	\]
	for some generic constant $C'$.
	Combining the above equations,  
	and setting 
	\begin{align*}
	m &= C''\frac{1}{\epsilon^2}\cdot\frac{\log (R'RK\delta^{-1})^{4/3}}{(1-\gamma)^{3}}
	\quad\text{and}\quad
	m_1 = C''\cdot \frac{\log (R'RK\delta^{-1})}{(1-\gamma)^{2}},
	\end{align*}
	$
	R \ge \Theta[i\cdot(1-\gamma)^{-1}]
	$ and $2^{-i}/(1-\gamma)\ge \Theta(\epsilon)$ for some generic constant $C''$, we can make the accumulated error as small as
	\[
	v^{*}(s)- V_{\theta^{(i, R)}}(s)
	\le c2^{-i}/(1-\gamma)
	\]
	for some $c>0$.
	Since $V_{\theta^{(i+1,0)}}(s) = V_{\theta^{(i,R)}}(s)$ together with the monotonicity properties shown in Lemma~\ref{lemma:mono}, we obtain that  conditioning on $\cG^{(i)}, \cE^{(i,0)}, \cE^{(i,1)}, \ldots, \cE^{(i,R)}$, the event $\cG^{(i+1)}$ happens with probability 1.
\end{proof}

\subsection{Proof of Theorem~\ref{thm:main-thm}}

\begin{proof}[Proof of Theorem~\ref{thm:main-thm}]
	Conditioning on $\cG^{(R')}$, 
	we have 
	\[
	\forall s\in \cS: \quad 
	0\le v^*(s) - V_{\theta^{( R', R)}}(s)
	\le 2^{-R'}/(1-\gamma). 
	\]
	Since $R' = \Theta(\log[\epsilon^{-1}(1-\gamma)^{-1}])$, we have
	$|v^*(s) - V_{\theta^{( R', R)}}(s)|$ $\le \epsilon$.
	Moreover, we have	 
	$$ v^*(s) -\epsilon \le V_{\theta^{(R', R)}}(s) \le \cT_{\pi_{\theta^{(R', R)}}}
	V_{\theta^{(R', R)}}[s] \le v^{\pi_{\theta^{(R', R)}}}[s]\le v^*(s),$$
	where the third inequality follows from monotonicity of $\cT_{\pi^{(R', R)}}$. 
	Therefore
	$\pi_{\theta^{( R', R)}}$ is an $\epsilon$-optimal policy from any initial state $s$.
	Notice that $\PP[\cG^{(i)}|\cG^{(i-1)}]\ge 1-\delta/R'$, we have $\PP[\cG^{(R')}]\ge \PP[\cG^{(R')}\cap \cG^{(R-1)}\cap\ldots \cG^{(0)}] \ge 1-\delta$.
	Finally, one can show the main result by counting the number of samples needed by the algorithm.  
\end{proof}

\end{document}